\algnewcommand\algorithmicinput{\textbf{Input:}}
\algnewcommand\Input{\item[\algorithmicinput]}
\algnewcommand\algorithmicoutput{\textbf{Output:}}
\algnewcommand\Output{\item[\algorithmicoutput]}
\algnewcommand\algorithmicinitialize{\textbf{Initialize:}}
\algnewcommand\Initialize{\item[\algorithmicinitialize]}
\newtheorem{definition}{Definition}[section]
\newcommand{\argmin}{\mathop{\rm arg~min}\limits}
\newcommand{\bvec}[1]{\boldsymbol{\mathbf{#1}}}
\newcommand{\bmat}[1]{\boldsymbol{\mathbf{#1}}}
\newcommand{\Tref}[1]{Table~\ref{#1}}
\newcommand{\Eref}[1]{Eq.~(\ref{#1})}
\newcommand{\Fref}[1]{Fig.~\ref{#1}}
\newcommand{\Aref}[1]{Alg.~\ref{#1}}
\newcommand{\Sref}[1]{Section~\ref{#1}}
\newtheorem{theorem}{Theorem}
\ifcvprfinal\pagestyle{empty}\fi
\begin{document}

\setlength{\abovedisplayskip}{2pt} 
\setlength{\belowdisplayskip}{2pt} 

\title{Residual Expansion Algorithm: Fast and Effective Optimization for \\ Nonconvex Least Squares Problems}
	
\author{Daiki Ikami~~~~~~Toshihiko Yamasaki~~~~~~Kiyoharu Aizawa \\
	The University of Tokyo, Japan\\
	{\tt\small\{ikami, yamasaki, aizawa\}@hal.t.u-tokyo.ac.jp}
}

\maketitle

	\begin{abstract}
We propose the residual expansion (RE) algorithm: a global (or near-global) optimization method for nonconvex least squares problems. Unlike most existing nonconvex optimization techniques, the RE algorithm is not based on either stochastic or multi-point searches; therefore, it can achieve fast global optimization. Moreover, the RE algorithm is easy to implement and successful in high-dimensional optimization. The RE algorithm exhibits excellent empirical performance in terms of k-means clustering, point-set registration, optimized product quantization, and blind image deblurring.
	\end{abstract}
	
	\section{Introduction}
	Many problems in computer vision and machine learning can be formulated as optimization problems. If we can formulate a problem as a convex optimization, we can solve it by convex optimization techniques such as gradient-based methods. However, most optimization problems are nonconvex and often have many local minima. In these cases, convex optimization techniques can find only local minima.
	
	Global optimization of nonconvex problems is an NP-hard problem in most cases. Therefore, heuristic methods are often used to find a global (or near-global) optimum. There are two major approaches: good initialization and stochastic optimization. The former is fast and effective if we can obtain a good initial guess~\cite{arthur2007k}; however, many optimization problems do not provide this. The latter is random search or multiple-point search, which is represented by simulated annealing (SA)~\cite{kirkpatrick1983optimization}, particle swarm optimization (PSO)~\cite{kennedy2011particle}, and genetic algorithms (GA)~\cite{mitchell1998introduction}. Although these methods are effective with low-dimensional optimization problems, it is difficult to obtain good solutions with high-dimensional ones. Moreover, these approaches often require excessive computation time to obtain a good solution.
	
	\begin{figure}[t]
		\begin{center}
			\hfill
			\subfloat[The global optimum result.]{\includegraphics[width=0.45\linewidth]{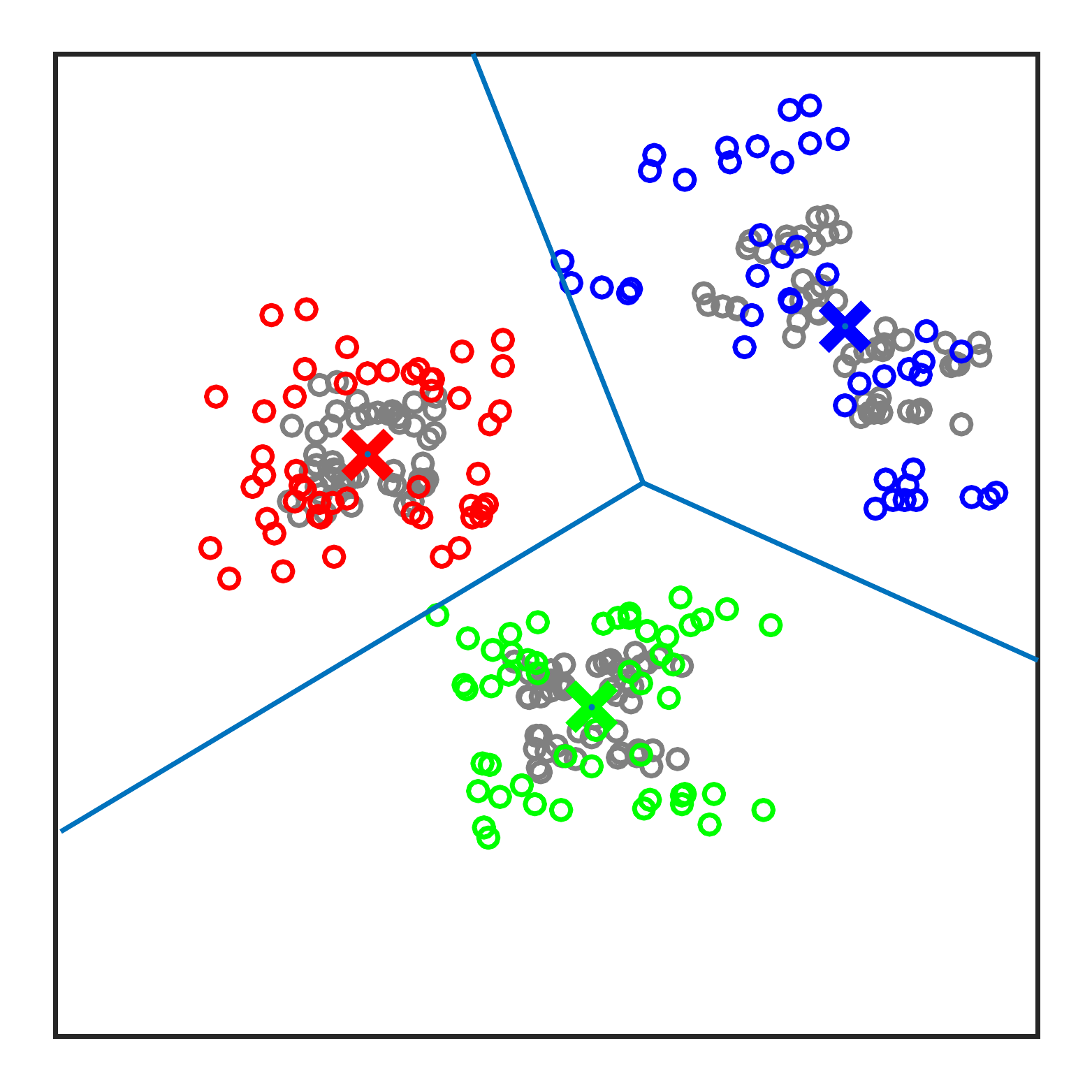}
				\label{fig:RE_fig1_1}} 
			\hfill
			\subfloat[The local minimum result.]{\includegraphics[width=0.45\linewidth]{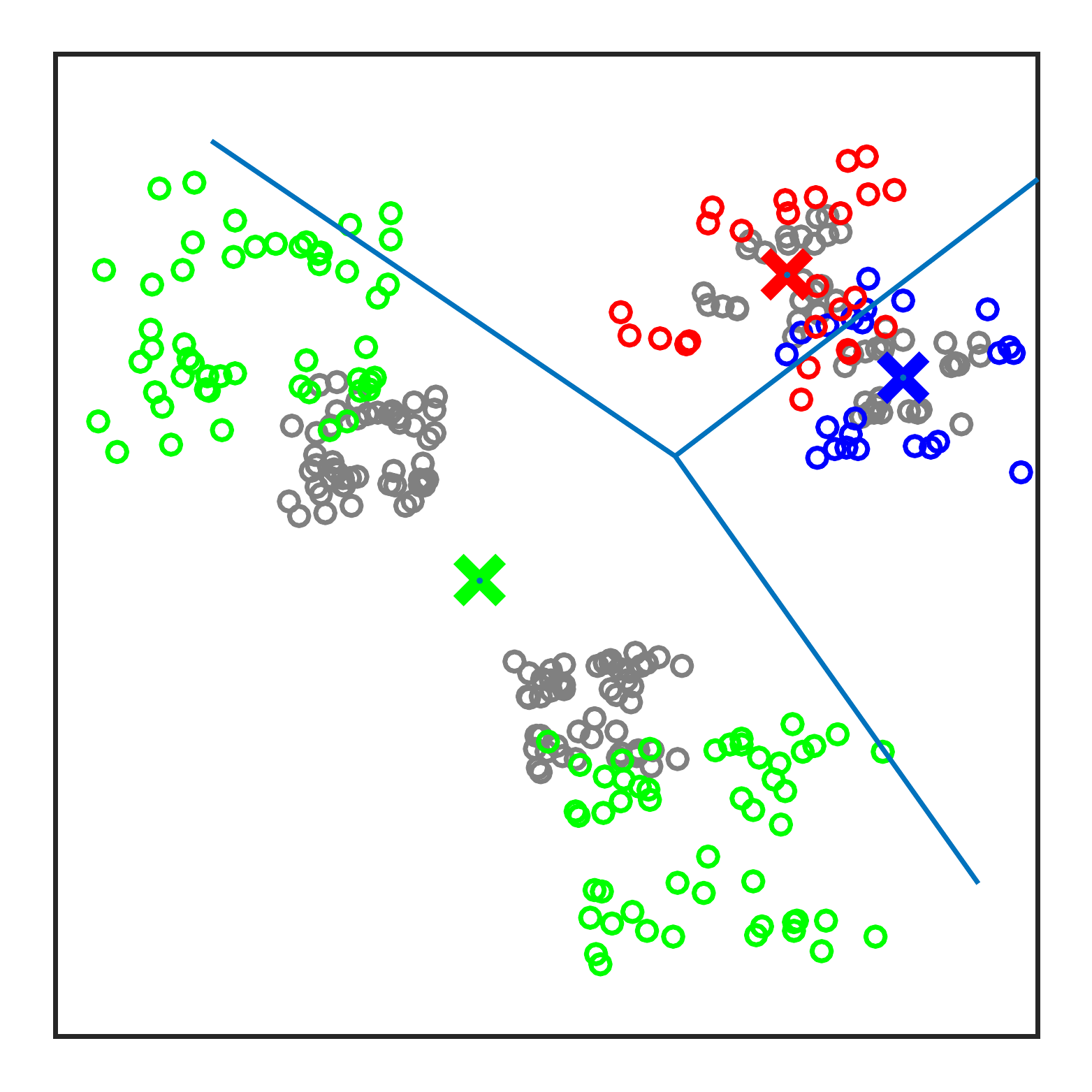}
				\label{fig:RE_fig1_2}} 
			\hfill\null
		\end{center}
		\vspace{-4mm}
		\caption{K-means clustering results with different RE convergence. Gray circles denote original data points and red, blue, and green circles denote $\alpha$-expanded data points from each cluster center. \Fref{fig:RE_fig1_1} shows $\alpha$ RE convergence with $\alpha=1$; however, \Fref{fig:RE_fig1_2} does not show this. In this case, the solution with a larger RE constant achieves the global optimum. The details of RE convergence and the RE constant are described in \Sref{sec:REConvergence}.}
		\label{fig:RE_fig1}
		\vspace{-4mm}
	\end{figure}
	
	In this paper, we propose a fast and effective optimization method for nonconvex least squares (LS) problems such as k-means clustering and point-set registration. First, we propose a novel measure of convergence called RE convergence: this represents how far we can expand data points along their residual directions under convergence.
	\Fref{fig:RE_fig1} shows k-means results and expanded data points. \Fref{fig:RE_fig1_1} depicts convergence on expanded data while \Fref{fig:RE_fig1_2} shows a case that is not converged. We presume that RE convergence is associated with global convergence. In fact, we can prove that the solution that is stable on a large expansion is the global optimum in the case of a one-dimensional quartic minimization problem.
	
	Additionally, we propose a heuristic algorithm to find a solution that is stable on the large expansion, which we term the residual expansion (RE) algorithm. This algorithm is based on neither multiple-point search nor random search, and thus fast computation can be achieved. 

	Our contribution is as follows: 
	\begin{enumerate}
		\setlength{\parskip}{0cm}
		\setlength{\itemsep}{0cm}
		\item{We propose a novel concept of convergence, RE convergence. We show the relationship between RE convergence and the global optimum.}
		\item{We propose the RE algorithm, which can be applied for any nonconvex LS problem. We show that the RE algorithm is fast, effective, and easy to implement.}
		\item{We show the RE algorithm's excellent performance for various nonconvex LS problems such as k-means clustering, point-set registration, optimized product quantization, and blind image deblurring. }
	\end{enumerate}

	\section{Related works}
	\subsection{Nonconvex least squares problems}
	\label{sec:nonconvexLS}
	We focus on nonconvex LS problems, of which many exist. In this paper, we study the following four important problems in computer vision and machine learning.
	
	\subsubsection{K-means clustering}
	K-means clustering is one of the most popular clustering methods with various applications such as quantization~\cite{jegou2011product}, feature learning~\cite{csurka2004visual}, and segmentation~\cite{achanta2012slic}. K-means clustering assigns data vectors $\bvec{x}_1,\ldots,\bvec{x}_n\in\mathbb{R}^d$ to the nearest representative clusters. The optimization problem can be formulated as
	\begin{gather}
		\min_{\bmat{C},\bmat{Z}} \frac{1}{2}\lVert\bmat{X}-\bmat{C}\bmat{Z}\rVert_F^2 
		\label{eq:kmeans} \\ \mbox{s.t.}\;\; z_{ij} = \{0,1\}, \lVert\bvec{z}_i\rVert_1 = 1,\nonumber
	\end{gather}
	where $\bmat{X}=[\bvec{x}_1,\ldots,\bvec{x}_n]\in\mathbb{R}^{d\times n}$ is a data matrix, $\bmat{C}=[\bvec{c}_1,\ldots,\bvec{c}_k] \in \mathbb{R}^{d\times k}$ is a matrix of cluster centroids, and $\bmat{Z} = [\bvec{z}_1,\ldots,\bvec{z}_n]\in\mathbb{R}^{k\times n}$ is an assignment matrix.
	
	The most popular optimization method is Lloyd's algorithm~\cite{lloyd1982least}, which has an update step (fix $\bmat{Z}$ and update $\bmat{C}$) and an assignment step (fix $\bmat{C}$ and update $\bmat{Z}$). Hartigan's algorithm~\cite{hartigan1975clustering} achieves better clustering than Lloyd's algorithm because the set of local minima of Hartigan's algorithm is a subset of those of Lloyd's algorithm~\cite{telgarsky2010hartigan,slonim2013hartigan}. For good initialization, k-means++~\cite{arthur2007k} is often used because of its efficiency and effectiveness.

	\subsubsection{Point-set registration}
	Point-set registration is a fundamental problem in computer vision. Here we consider a rigid 3D-point-set registration problem: Given source point sets $\bmat{X}=[\bvec{x}_1,\dots,\bvec{x}_n]\in\mathbb{R}^{3\times n}$ and target point sets $\bmat{Y}=[\bvec{y}_1,\ldots,\bvec{y}_m] \in \mathbb{R}^{3\times m}$, we estimate the best rigid transformation parameters.
	
	In this paper, we consider the following optimization problem with a point-to-point cost function:
	\begin{gather}
		\min_{\bmat{R},\bvec{t},\bmat{Z}} \frac{1}{2}\lVert\bmat{R}\bmat{X}+\bvec{t}\bvec{1}^\top-\bmat{Y}\bmat{Z}\rVert_F^2\label{eq:ICP} \\ \mbox{s.t.} \;\; z_{ij} = \{0,1\}, \lVert\bvec{z}_i\rVert_1 = 1, \bmat{R}^\top\bmat{R}=\bmat{I}, \nonumber
	\end{gather}
	where $\bmat{R}\in\mbox{SO}(3)$ is a rotation matrix, $\bvec{t}\in\mathbb{R}^3$ is a translation vector, and $\bmat{Z} = [\bvec{z}_1,\ldots,\bvec{z}_n]\in\mathbb{R}^{k\times n}$ is an assignment matrix. $\bmat{I}$ is an identity matrix and $\bvec{1}$ is a vector of all ones.
	
	The iterative closest point (ICP) algorithm~\cite{besl1992method} is a well-known alternating optimization method: it fixes $\bmat{Z}$ and updates $\bmat{R}$, $\bvec{t}$, and then fixes $\bmat{R}, \bvec{t}$ and updates $\bmat{Z}$. To obtain a global minimum, some studies adopt stochastic optimization, such as GA~\cite{silva2005precision}, PSO~\cite{wachowiak2004approach}, and SA~\cite{luck2000registration}. Recently, Yang \etal proposed Go-ICP~\cite{yang2013go}, which guarantees global optimality by using the branch-and-bound algorithm. However, it requires significant computation time.

	\subsubsection{Optimized product quantization}
	Optimized product quantization (OPQ)~\cite{ge2013optimized,norouzi2013cartesian}, which is an extension of product quantization (PQ), is an efficient fast approximate nearest neighbor search method. The optimization problem in OPQ is described by
	\begin{eqnarray}
		\min_{\bmat{R},\bmat{C},\bmat{Z}}\frac{1}{2}\sum_{i=1}^{N}\left\lVert\bvec{x}_i-\bmat{R}\left[
		\begin{array}{c}\bmat{C}^{(1)}\bvec{z}_i^{(1)} \\
			\vdots \\
			\bmat{C}^{(M)}\bvec{z}_i^{(M)}\end{array}
		\right]\right\rVert_2^2\label{eq:OPQ} \label{eq:OPQ} \\ \mbox{s.t.} \;\; z_{ij}^{(m)} = \{0,1\}, \left\lVert\bvec{z}_i^{(m)}\right\rVert_1 = 1,\bmat{R}^\top\bmat{R}=\bmat{I}, \nonumber
	\end{eqnarray}
	where $\bmat{X},\bmat{C},\bmat{Z}$ have the same meaning as in Section 2.1.1 and $\bmat{R}$ is a rotation matrix.
	
	The optimization problem of \Eref{eq:OPQ} can be solved by alternating optimization of $\bmat{R}$, $\bmat{C}$, and $\bmat{Z}$~\cite{ge2013optimized,norouzi2013cartesian}. Ge \etal also proposed a parametric optimization method that assumes the data follows a parametric Gaussian distribution~\cite{ge2013optimized}.

	\subsubsection{Blind image deblurring}
	Blind image deblurring has long been a challenging problem in computer vision. 
	From a blurred image $\bmat{B}\in\mathbb{R}^{h\times w}$, we estimate an original image $\bmat{I}\in\mathbb{R}^{h\times w}$ and blur kernel $\bmat{k}\in\mathbb{R}^{k\times k}$ by minimizing the following cost function: 
	\begin{equation}
		\min_{\bmat{I},\bmat{k}}\frac{1}{2}\lVert\bmat{I}\otimes\bmat{k}-\bmat{B}\rVert_F^2 + \gamma_I R_I(\bmat{I}) + \gamma_k R_k(\bmat{k}),
		\label{eq:BlindImageDeconv}
	\end{equation}
	where $R_I(\bmat{I})$ and $R_k(\bmat{k})$ are the regularization terms, and $\otimes$ denotes the convolution operator. For $R_I(\bmat{I})$, L0-norm (or approximately L0-norm)~\cite{xu2013unnatural,pan2014deblurring}, or L1/L2 functions~\cite{krishnan2011blind} are proposed to enforce the sharp edges of the original image. For $R_k(\bmat{k})$, L2-norm~\cite{xu2013unnatural,pan2014deblurring} or L1-norm~\cite{krishnan2011blind} are often used. We refer to the paper~\cite{Lai_2016_CVPR} for a recent comparative study of blind image deblurring.
	
	We can minimize \Eref{eq:BlindImageDeconv} by alternating optimization of $\bmat{I}$ and $\bmat{k}$. For fast and effective optimization, a coarse-to-fine strategy~\cite{cho2009fast,krishnan2011blind,xu2013unnatural,pan2014deblurring} is generally employed.

	\subsection{Nonconvex optimization techniques}
	\label{sec:RelatedNonconvex}
	Most nonconvex optimization techniques are based on stochastic optimization, including GA~\cite{mitchell1998introduction}, PSO~\cite{kennedy2011particle}, and SA~\cite{kirkpatrick1983optimization}. These methods generally do not work well or require significant computation time for high-dimensional optimization problems. Several studies~\cite{blais1995registering,krishna1999genetic,luck2000registration,wachowiak2004approach,silva2005precision} have employed these nonconvex optimization techniques to our target problems described in \Sref{sec:nonconvexLS}; however, these methods are not often used in practice.
	
	Our approach is related to graduated nonconvexity (GNC)~\cite{blake1987visual}, which first solves a simplified optimization problem and then gradually transforms the problem into the original nonconvex problem. The basic concept of graduated optimization methods is {\it extinguishing local minima} by using a convexified objective function, and then gradually changing the objective function to the original function. We refer readers to ~\cite{mobahi2015link} for a recent survey of graduated optimization. In contrast to GNC, our approach is explicitly {\it to escape from poor local minima} by using a largely expanded objective function and then gradually transforming it into the original function, as described in \Sref{sec:REalgorithm}.

	\section{Residual expansion convergence}
	\label{sec:REConvergence}
	First, we describe RE convergence, which indicates how we can expand data along their residual directions. RE convergence is a measure of the depth of convergence, and our proposed algorithm is based on this concept. We show a relationship between the global optimum and RE convergence.
	
	We discuss a nonconvex least squares (LS) optimization problem whose objective function is given by
	\begin{equation}
		E\left(\bvec{\uptheta}\right) = \frac{1}{2}\lVert\bvec{y}-\bvec{f}\left(\bvec{\uptheta}\right)\rVert_2^2.
		\label{eq:LS}
	\end{equation}
	Our definitions are as follows.
	\begin{definition}[Residual Expansion]
		Let $\bvec{\uptheta}^*$ be a local minimum point of \Eref{eq:LS}. We define the $\alpha$-expanded objective function $E_\alpha(\bvec{\uptheta})$:
		\begin{eqnarray}
			E_\alpha\left(\bvec{\uptheta}\right) = \frac{1}{2}\lVert\hat{\bvec{y}}-\bvec{f}\left(\bvec{\bvec{\uptheta}}\right)\rVert_2^2.	
		\end{eqnarray}
		where $\hat{\bvec{y}}$ is constructed by expanding $\bvec{y}$ in the residual direction with a magnitude of $\alpha$ as
		\begin{equation}
			\hat{\bvec{y}} = \bvec{y}+\alpha\left(\bvec{y}-\bvec{f}\left(\bvec{\bvec{\uptheta}^*}\right)\right), \\
		\end{equation}
		We call the operation that constructs the $\alpha$-expanded objective function residual expansion (with $\alpha$).
	\end{definition}
	\begin{definition}[$\alpha$ RE convergence]
		$\bvec{\uptheta}^*$ is called $\alpha$ RE convergence if there exists a constant $\alpha\geq0$ such that $\bvec{\uptheta}$ is still a local optimum of $E_\alpha(\bvec{\uptheta})$. In particular, the maximum (or supremum) constant is called the RE constant\footnote{If $\bvec{\uptheta}^*$ is always a local minimum of $E_\alpha(\bvec{\uptheta}^*)$ under all $\alpha\geq0$, we define the RE constant as $\infty$. }.	
	\end{definition}
	
	Our hypothesis is that {\it a solution with a larger $\alpha$-RE constant is closer to the global optimum solution}. This hypothesis holds in the case of quartic minimization, as presented in section 3.1.1.

	\subsection{Unconstrained and differentiable problems}
	We consider one of the simplest cases: unconstrained and differentiable LS problems. Given a local optimum $\bvec{\uptheta}^*$, we can obtain first- and second-order derivatives of the $\alpha$-expanded objective function $E_\alpha\left(\bvec{\uptheta}\right)$ at $\bvec{\uptheta^*}$ as
	\begin{eqnarray}
		\nabla E_\alpha\left(\bvec{\uptheta}^*\right) = (1+\alpha)\bmat{J}^\top\left(\bvec{\uptheta}^*\right)\left(\bvec{y}-\bvec{f}\left(\bvec{\uptheta}^*\right)\right) = \bvec{0},
		\label{eq:RE_derivative1}\\
		\nabla^2 E_\alpha\left(\bvec{\uptheta}^*\right) = \bvec{J}^\top\left(\bvec{\uptheta}^*\right)\bvec{J}\left(\bvec{\uptheta}^*\right) + (1+\alpha)\bmat{S}\left(\bvec{\uptheta}^*\right).
		\label{eq:RE_derivative2}
	\end{eqnarray}
	where $J$ is a Jacobian matrix and $\bmat{S}\left(\bvec{\uptheta}^*\right)$ is
	\begin{equation}
		\bmat{S}\left(\bvec{\uptheta}^*\right) = \sum_i{\nabla^2f_i(\bvec{\uptheta}^*)\left(y_i-f_i\left(\bvec{\uptheta}^*\right)\right)}.
	\end{equation}
	\Eref{eq:RE_derivative1} means that $\bvec{\uptheta}^*$ is always a stationary point of $E_\alpha\left(\bvec{\uptheta}\right)$. Therefore, $\bvec{\uptheta}^*$ is a local minimum of $E_\alpha\left(\bvec{\uptheta}\right)$ if and only if $\nabla^2 E_\alpha\left(\bvec{\uptheta}\right)$ is a positive semi-definite (PSD) matrix. If $\bmat{S}$ is not a PSD matrix, there is a $\alpha\geq 0$ which satisfies the fact that $\nabla^2 E_\alpha\left(\bvec{\uptheta}\right)$ is not a PSD matrix. 
	
	\Fref{fig:REConvergence} shows examples of $\alpha$-expanded objective functions. Residual expansion elevates the objective function around $\uptheta^*$, and if $\alpha$ is sufficiently large then it ceases to be a local minimum.
	
	\begin{figure}[t]
		\centering
		\subfloat[$\theta_1^*$ and its expanded objective function.]{\includegraphics[width=0.45\linewidth]{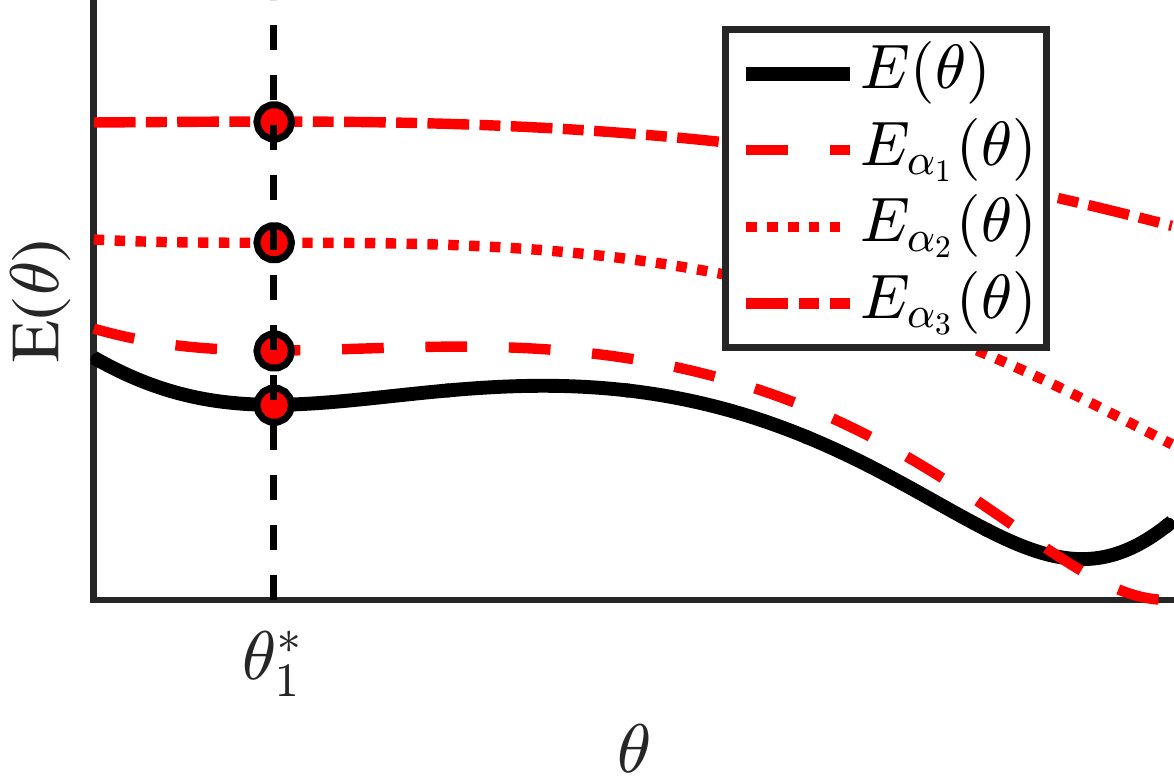}
			\label{fig:REConvergence1}} 
		\hspace{2mm}
		\centering
		\subfloat[$\theta_2^*$ and its expanded objective function.]{\includegraphics[width=0.45\linewidth]{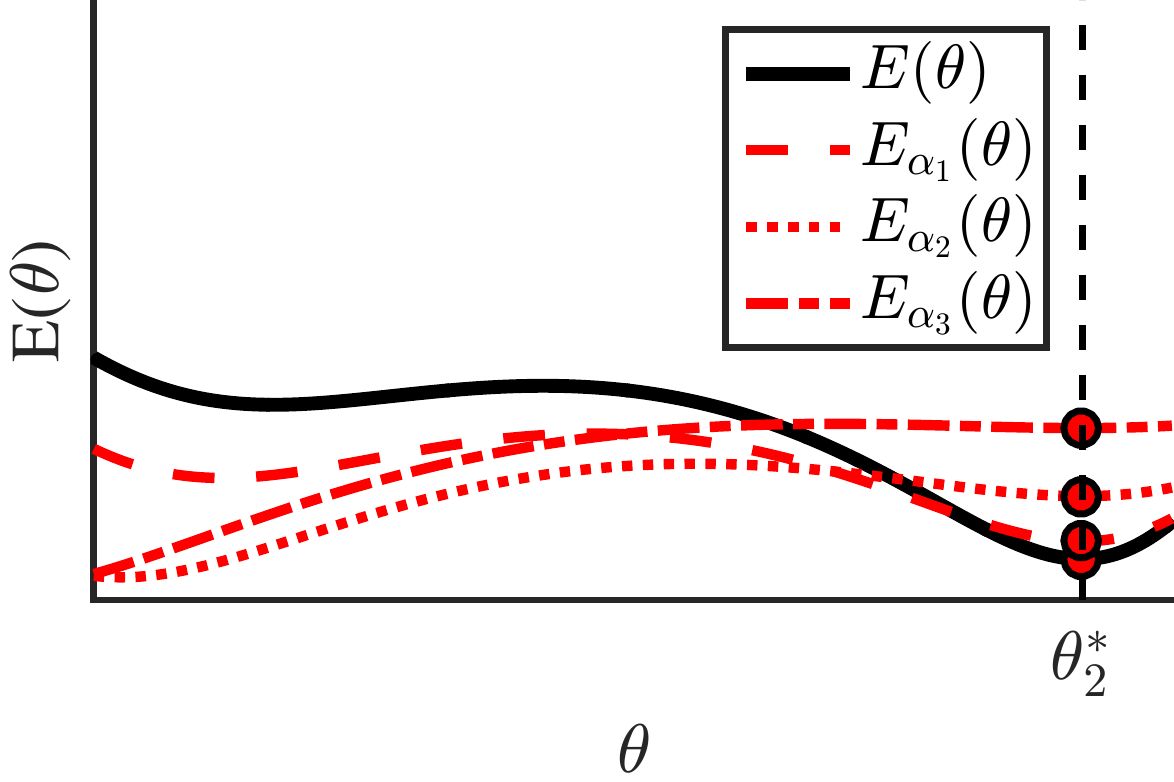}
			\label{fig:REConvergence2}}
		\vspace{-2mm}
		\caption{Expanded objective functions $E_\alpha(\theta)$ with different local minima, $\theta_1^*$ and $\theta_2^*$. Red broken lines denote different $\alpha$-expanded objective functions $E_\alpha(\theta)$ with $\alpha_1<\alpha_2<\alpha_3$. $\theta_2^*$ is still a local minimum of $E_{\alpha_3}(\theta)$, while $\theta_1^*$ is not. }
		\label{fig:REConvergence}
		\vspace{-2mm}
	\end{figure}
	
	{\bf One-dimensional quartic minimization:}
	Here we consider a quartic minimization problem---in particular, one that can be formulated as an LS problem:
	\begin{equation}
		E\left(\theta\right) = \frac{1}{2}\left(\left(y_1-\theta^2\right)^2+\left(y_2-\theta\right)^2\right).
		\label{eq:QuarticMinimization}
	\end{equation}
	We consider the case where \Eref{eq:QuarticMinimization} has two local minima $\theta_1$ and $\theta_2$. The following theorem then holds:
	\begin{theorem}
		Let $\theta_1$ and $\theta_2$ be local minima points of \Eref{eq:QuarticMinimization} with RE constants of $\alpha_1$ and $\alpha_2$, respectively. $\theta_1$ is the global minimum point if $\alpha_1 > \alpha_2$ and $\theta_2$ is the global minimum point otherwise.
	\end{theorem}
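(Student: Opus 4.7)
The plan is to factor both $\alpha_1 - \alpha_2$ and $E(\theta_2) - E(\theta_1)$ as a strictly positive prefactor times the common sign carrier $(\theta_1 - \theta_2)(\theta_1 + \theta_2)$, from which the equivalence follows immediately. The only subtlety is the case in which some residual is non-positive, so that the corresponding RE constant is $+\infty$; I handle that at the end.

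Specializing to \Eref{eq:QuarticMinimization}, stationarity reads $2\theta_i^3 - (2y_1 - 1)\theta_i - y_2 = 0$ and $E''(\theta_i) = 6\theta_i^2 + 1 - 2y_1$. Subtracting the two stationarity equations and dividing by $\theta_1 - \theta_2 \neq 0$ yields the central identity
\[
\theta_1^2 + \theta_1\theta_2 + \theta_2^2 = y_1 - \tfrac{1}{2}.
\]
The existence of two distinct local minima forces $E''$ to be negative somewhere, hence $y_1 > 1/2$ and in particular $4y_1 + 1 > 0$. Setting $r_i := y_1 - \theta_i^2$, a direct differentiation of $E_\alpha$ gives $E_\alpha''(\theta_i) = E''(\theta_i) - 2\alpha r_i$, so the RE constant is $\alpha_i = (6\theta_i^2 + 1 - 2y_1)/(2 r_i)$ when $r_i > 0$ and $+\infty$ when $r_i \leq 0$. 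In the generic case $r_1, r_2 > 0$, rewriting as $1 + \alpha_i = (4\theta_i^2 + 1)/(2 r_i)$ and cross-multiplying yields
\[
\alpha_1 - \alpha_2 = \frac{(4y_1+1)(\theta_1-\theta_2)(\theta_1+\theta_2)}{2\,r_1 r_2}.
\]

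The core step is the analogous factorization of the energy gap. I would expand $E(\theta_2) - E(\theta_1)$ via difference-of-squares, use the summed stationarity equations to express $y_2$ in terms of $\theta_1 + \theta_2$, and then apply the central identity to trigger the cancellations. This algebraic collapse is the main technical obstacle; once it goes through, the clean form should be
\[
E(\theta_2) - E(\theta_1) = \tfrac{1}{2}(\theta_1 - \theta_2)^3(\theta_1 + \theta_2).
\]
Combining with the previous display and using $4y_1 + 1 > 0$ and $r_1 r_2 > 0$, one gets $\alpha_1 > \alpha_2 \Leftrightarrow (\theta_1-\theta_2)(\theta_1+\theta_2) > 0 \Leftrightarrow E(\theta_1) < E(\theta_2)$, which is the claim.

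For the degenerate case, the central identity rearranges to $\theta_1^2 + \theta_2^2 = (2y_1 - 1) - (\theta_1+\theta_2)^2 \leq 2y_1 - 1$, so at most one residual can be non-positive. If, say, $r_1 \leq 0 < r_2$, then $\alpha_1 = +\infty > \alpha_2$ and $\theta_1^2 \geq y_1 > \theta_2^2$; substituting into the energy-gap factorization above then gives $E(\theta_1) < E(\theta_2)$, again consistent with the theorem.
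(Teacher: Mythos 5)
Your proposal is correct and follows essentially the route the paper sets up in Section~3.1 (characterizing the RE constant via the sign of $E_\alpha''(\theta^*) = E''(\theta^*) - 2\alpha(y_1-\theta^{*2})$ and comparing with the energy gap): the step you flag as the main obstacle does go through, and the clean form $E(\theta_2)-E(\theta_1)=\tfrac{1}{2}(\theta_1-\theta_2)^3(\theta_1+\theta_2)$ is exactly right, since the difference-of-squares expansion gives $(\theta_1-\theta_2)(\theta_1+\theta_2)\left((\theta_1+\theta_2)^2+1\right)$ for the quadratic residual term and $-(\theta_1-\theta_2)(\theta_1+\theta_2)(4\theta_1\theta_2+1)$ for the linear one after substituting the summed stationarity equation and the central identity. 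Your handling of the infinite-RE-constant case (at most one residual non-positive, and then the sign of $\theta_1^2-\theta_2^2$ is forced) is also sound, so the argument is complete.
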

	\begin{proof}
		Please refer to the supplementary materials.
	\end{proof} 
	
	\subsection{General relationship between the $\alpha$ RE convergence and the global optimum}
	\begin{figure}[t]
		\begin{center}
			\subfloat[K-means clustering result of a poor local minimum.]{\includegraphics[width=0.45\linewidth]{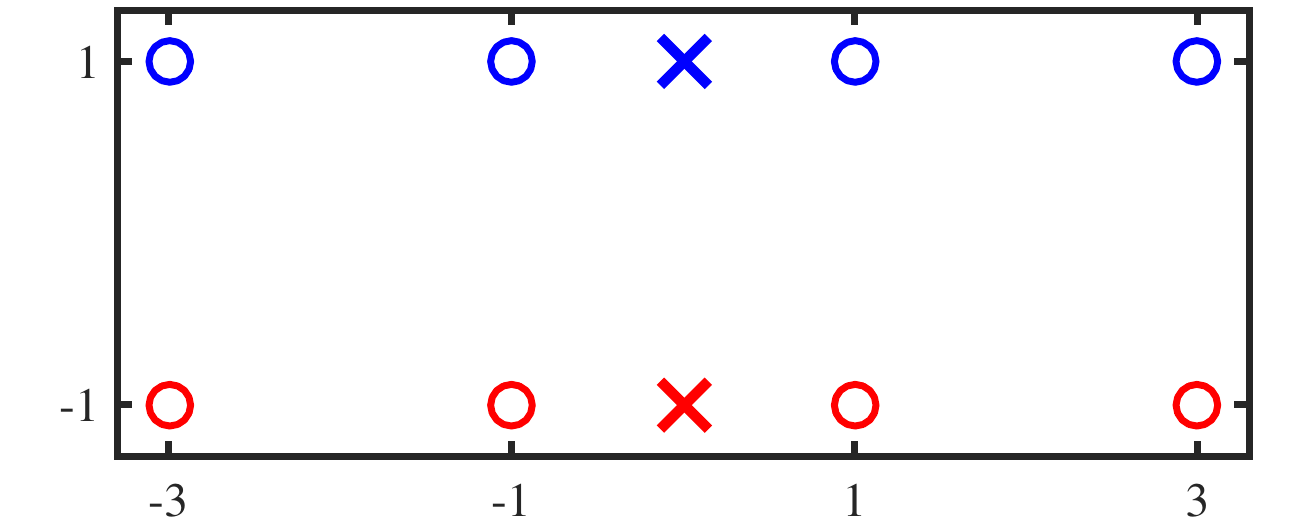}
				\label{fig:CounterExample1}}
			\hspace{2mm}
			\subfloat[K-means clustering result of a global minimum.]{\includegraphics[width=0.45\linewidth]{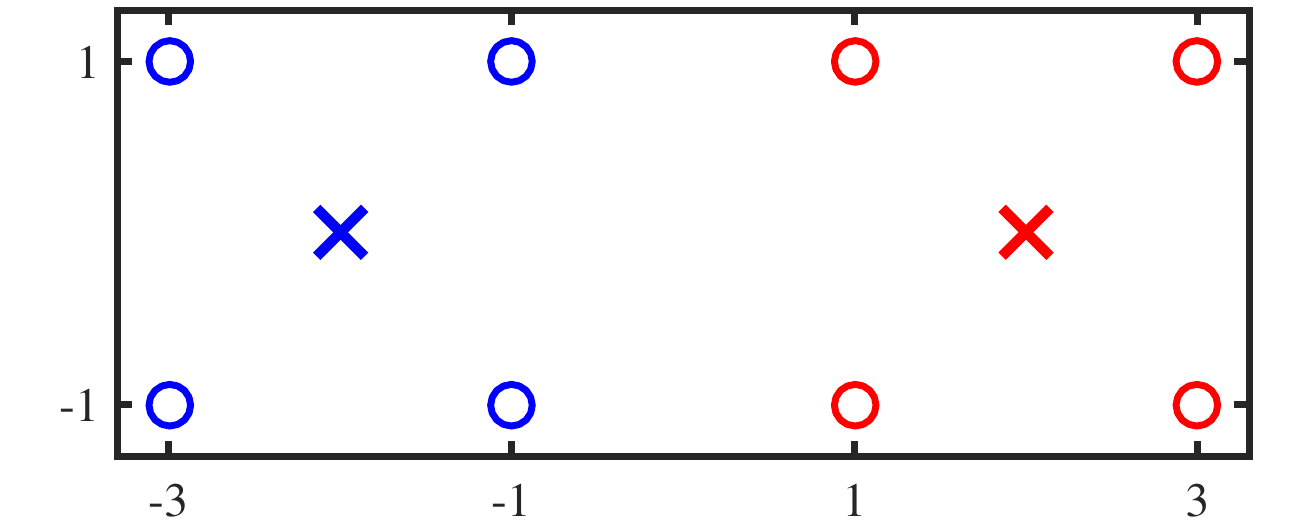}
				\label{fig:CounterExample2}}
		\end{center}
		\vspace{-3mm}
		\caption{Different k-means clustering results with $k=2$. The result (a) has an RE constant of $\alpha=\infty$; however, this is a poor local minimum. On the other hand, the result (b) has finite RE constant; however, this is a global minimum. }
		\label{fig:CounterExample}
		\vspace{-0mm}
	\end{figure}
	
	It is not obvious when our hypothesis, i.e., that a solution with a larger RE constant is closer to the global optimum, is valid. Unfortunately, we can easily find a counterexample in k-means clustering, as shown in \Fref{fig:CounterExample}. However, our algorithm, which aims to find a solution with a large RE constant, works well from an empirical perspective in many nonconvex LS problems.

	\section{Residual expansion algorithm}
	\label{sec:REalgorithm}
	In this section, we propose the RE algorithm, which aims to find a solution with a large RE constant.
	Since it is difficult to find the solution with the largest RE constant exactly, we employ a heuristic strategy.
	\begin{figure}[t]
		\begin{center}
			\includegraphics[width=0.99\linewidth]{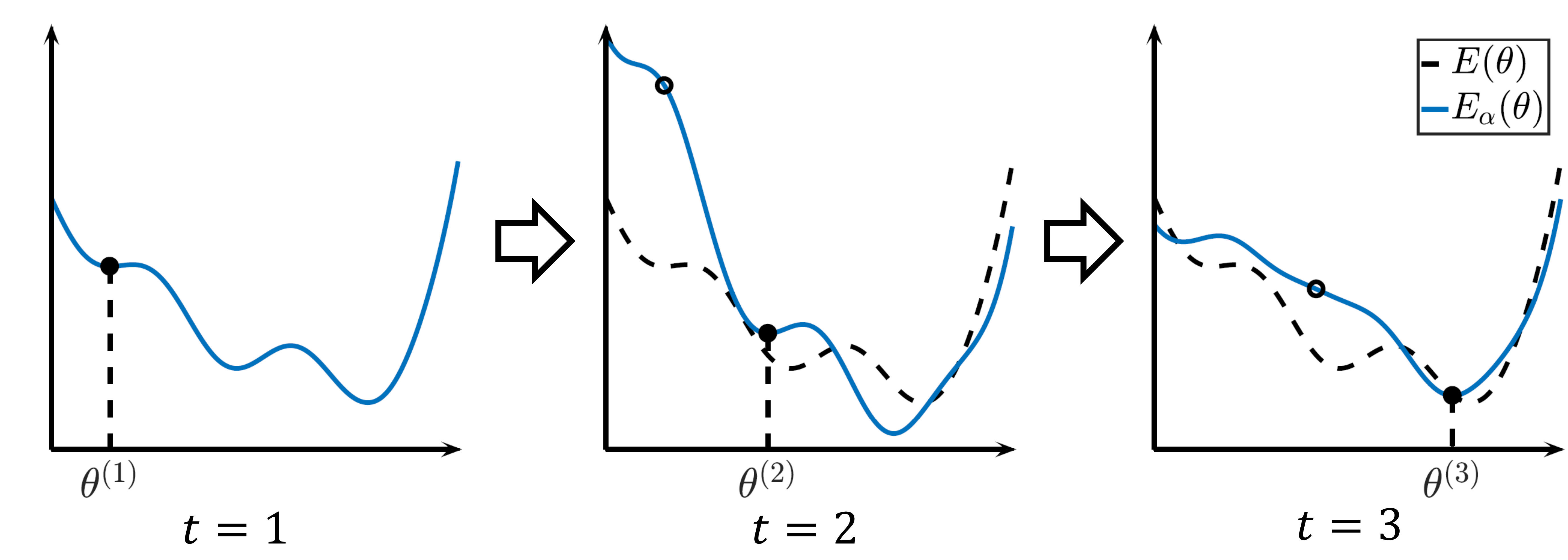}
		\end{center}
		\vspace{-5mm}
		\caption{Conceptual view of the RE algorithm. The algorithm iterates parameter updating and residual expansion, which elevates the objective function for the current solution.}
		\label{fig:REalgorithm}
		\vspace{-3mm}
	\end{figure}

	The RE algorithm has two steps: parameter updating and residual expansion. We show an intuitive illustration of the algorithm in \Fref{fig:REalgorithm}. For the residual expansion step, we expand data along their residual direction. This results in elevating the objective function around the current solution as in \Fref{fig:REConvergence}. For the parameter-updating step, instead of minimizing the original function \Eref{eq:LS}, we minimize the following expanded objective function in each iteration:
	\begin{equation}
		E_t\left(\bvec{\uptheta}\right) = \frac{1}{2}\lVert\hat{\bvec{y}}^{(t)}
		-\bvec{f}\left(\bvec{\uptheta}\right)\rVert_2^2,
		\label{eq:REalgorithm1}
	\end{equation}
	where $\hat{\bvec{y}}^{(t)}$ is an expanded data vector:
	\begin{eqnarray}
		\hat{\bvec{y}}^{(t)}&=&\bvec{y}+\alpha^{(t)}\bvec{r}^{(t)},\label{eq:REalgorithm2_1} \\
		\bvec{r}^{(t)} &=& p^{(t)}(\bvec{y}-\bvec{f}(\bvec{\uptheta}^{(t)}))+(1-p^{(t)})\bvec{r}^{(t-1)}.
		\label{eq:REalgorithm2_2}
	\end{eqnarray}
	where $\alpha$ and $0<p\leq 1$ are expansion and momentum parameters, respectively. Note that, if $p=1$, \Eref{eq:REalgorithm1} is an exactly $\alpha^{(t)}$-expanded objective function on $\bvec{\uptheta}^{(t)}$. The momentum parameter is important for achieving good performance and ensuring that the RE algorithm does not to diverge, as described later.
	
	The RE algorithm iterates through parameter updating by minimizing \Eref{eq:REalgorithm1} and residual expansions by \Eref{eq:REalgorithm2_1} and \Eref{eq:REalgorithm2_2}. We use a large $\alpha^{(0)}$ initially to find a solution with a large RE constant. Then we decrease $\alpha^{(t)}$ gradually to achieve convergence, analogous to a temperature parameter in SA. We summarize the RE algorithm in \Aref{alg:REalgorithm}.
\setlength{\textfloatsep}{5pt}
	\begin{algorithm}[t]
		\caption{Residual expansion algorithm.}
		\begin{algorithmic}[1]
			\Input{Expansion parameter $\alpha^{(t)}\rightarrow 0$, momentum $p^{(t)}$.}
			\Initialize{$t=0, \hat{\bvec{y}}^{(0)}=\bvec{y}, \bvec{r}^{(0)}=\bvec{0}$}
			\While{not converged}
			\State{Update $\bvec{\uptheta}$ by \Eref{eq:REalgorithm1} (or \Eref{eq:RegularizedLS2})}
			\State{$\bvec{r}^{(t+1)} = p^{(t)}\left(\bvec{y}-\bvec{f}\left(\bvec{\uptheta}^{(t+1)}\right)\right)+\left(1-p^{(t)}\right)\bvec{r}^{(t)} $}
			\State{$\hat{\bvec{y}}^{(t+1)} = \bvec{y}+ \alpha^{(t)}\bvec{r}^{(t+1)}$}
			\State{$t=t+1$}
			\EndWhile
			\Output{$\bvec{\uptheta}$}
		\end{algorithmic}
		\label{alg:REalgorithm}
	\end{algorithm}

	\subsection{Parameter setting for convergence}
	\label{sec:parameterSetting}
	The RE algorithm has two parameters, $\alpha$ and $p$, for each iteration. We decrease $\alpha^{(t)}$ to 0 for convergence. when $\alpha=0$, there is no residual expansion and RE algorithm is guaranteed to converge if the original LS problem has a convergence-guaranteed algorithm. However, inadequate parameters of $\alpha$ and $p$ cause unstable optimization. We consider the norm of $\bvec{r}^{(t+1)}$. We can obtain
	\begin{eqnarray}
		\left\lVert\bvec{r}^{(t+1)}\right\rVert_2^2 &=&\left\lVert p\left(\bvec{y}-\bvec{f}\left(\bvec{\uptheta}^{(t+1)}\right)\right) + \left(1-p\right)\bvec{r}^{(t)} \right\rVert_2^2 \nonumber \\ 
		&\sim& \left(1-p-\alpha p\right)^2\left\lVert\bvec{r}^{(t)}\right\rVert_2^2.
		\label{eq:parameterSetting}
	\end{eqnarray}
	We use $\bvec{f}\left(\bvec{\uptheta}^{(t+1)}\right)\sim \hat{\bvec{y}}^{(t)}$ for the last approximation of \Eref{eq:parameterSetting}. \Eref{eq:parameterSetting} suggests $\left(1-p-\alpha p\right)^2\leq1$ to make the RE algorithm stable. A good way to determine these values of $\alpha$ and $p$ is described in \Sref{sec:ADMM}.
	
	\subsection{Advantages of the RE algorithm}
	Our algorithm consists of two steps of parameter updating and residual expansion.
	Parameter updating is based simply on a typical LS problem approach.
	Therefore, if there is a source code which minimizes \Eref{eq:LS}, for example, by alternative optimization or gradient methods, then we can implement our algorithm by adding a residual expansion step to the existing code, which can be done in a few lines of code.
	
	Moreover, the computational complexity of residual expansion is generally less than that of parameter updating. Therefore, we can achieve faster optimization than most nonconvex optimization techniques based on multi-point search or random search, such as SA and GA. 
	
	As described in \Sref{sec:RelatedNonconvex}, GNC is a similar approach to ours; however GNC often does not apply for LS problems. Our algorithm can be applied for any nonconvex LS problem provided that there is a method for finding a local optimum, such as Lloyd's algorithm for k-means clustering and ICP algorithms for point-set registration.

	\section{Alternate direction method of multipliers for least squares problems}
	\label{sec:ADMM}
	In this section, we apply the alternate direction method of multipliers (ADMM)~\cite{boyd2011distributed} to solve \Eref{eq:LS}. We show that ADMM is a special case of the RE algorithm for LS problems. Moreover, ADMM suggests a modified RE algorithm for regularized LS problems.
	
	We introduce an auxiliary variable $\bvec{z} = \bvec{y}-\bvec{f}\left(\bvec{\uptheta}\right)$ and rewrite \Eref{eq:LS} as a constrained optimization problem:
	\begin{align}
		&\min_{\bvec{z},\bvec{\uptheta}}\frac{1}{2}\lVert\bvec{z}\rVert_2^2 \label{eq:LS_constrained} \\ \mbox{s.t.}\;\;&\bvec{z}=\bvec{y}-\bvec{f}\left(\bvec{\uptheta}\right). \nonumber
	\end{align}
	We can construct the augmented Lagrangian function of \Eref{eq:LS_constrained} as
	{
	\thickmuskip=0mu
	\medmuskip=0mu
	\thinmuskip=0mu
	\begin{equation}
		L_t(\bvec{\uptheta},\bvec{z},\uplambda) = \frac{1}{2}\lVert\bvec{z}\rVert_2^2 + \uplambda^\top\left(\bvec{z}-\bvec{y}+\bvec{f}\left(\bvec{\uptheta}\right)\right) + \frac{\mu^{(t)}}{2}\lVert\bvec{z}-\bvec{y}+\bvec{f}\left(\bvec{\uptheta}\right)\rVert_2^2
		\label{eq:LR_LagrangeFunction}.
	\end{equation}
	}
	We take the alternating direction approach for solving \Eref{eq:LR_LagrangeFunction} and then obtain update rules as
	\begin{eqnarray}
		&\bvec{\uptheta}^{(t+1)} = \argmin_{\bvec{\uptheta}} L_t(\bvec{\uptheta}, \bvec{z}^{(t)}, \uplambda^{(t)}) \label{eq:ADMM_update1_1} \\
		&\bvec{z}^{(t+1)} = \argmin_{\bvec{z}} L_t(\bvec{\uptheta}^{(t+1)}, \bvec{z}, \bvec{\uplambda}^{(t)}) \label{eq:ADMM_update1_2}\\
		&\bvec{\lambda}^{(t+1)}=\bvec{\lambda}^{(t)}+\mu^{(t)}\left(\bvec{z}^{(t+1)}-\bvec{y}+\bvec{f}\left(\bvec{\uptheta}\right)^{(t+1)}\right) \label{eq:ADMM_update1_3}
	\end{eqnarray}

	\subsection{Relation to the RE algorithm}
	
	We can simplify \Eref{eq:ADMM_update1_1}, \Eref{eq:ADMM_update1_2} and \Eref{eq:ADMM_update1_3} as
	{\thickmuskip=0mu
	\medmuskip=0mu
	\thinmuskip=0mu
	\begin{eqnarray}
		\bvec{\uptheta}^{(t+1)}=\argmin_{\bvec{\uptheta}}{\frac{\mu^{(t)}}{2}\left\lVert\bvec{y}+\left(\frac{1-\mu^{(t)}}{\mu^{(t)}}\right)\bvec{z}^{(t)}-\bvec{f}\left(\bvec{\uptheta}\right)\right\rVert_2^2,\label{eq:ADMM_update2_1}} \\
		\bvec{z}^{(t+1)}=\left(\frac{1}{1+\mu^{(t)}}\right)\bvec{z}^{(t)}+\left(\frac{\mu^{(t)}}{1+\mu^{(t)}}\right)\left(\bvec{y}-\bvec{f}\left(\bvec{\uptheta}^{(t+1)}\right)\right).
		\label{eq:ADMM_update2_2}
	\end{eqnarray}}Details of the derivation are described in the supplementary material. This is a special case of the RE algorithm of \Eref{eq:REalgorithm2_1} and \Eref{eq:REalgorithm2_2} with
	\begin{eqnarray}
		\alpha^{(t)} = (1-\mu^{(t)})/\mu^{(t)} \label{eq:relationADMMandRE_1}, \\ p^{(t)} = \mu^{(t)}/(1+\mu^{(t)}). \label{eq:relationADMMandRE_2}
	\end{eqnarray}
	
	There are two main advantages to using ADMM. First, we can choose only $\mu$ instead of parameters $\alpha$ and $p$ in the general RE algorithm. \Eref{eq:relationADMMandRE_1} and \Eref{eq:relationADMMandRE_2} always satisfy $(1-p-\alpha p)^2<1$, which is a condition necessary for avoiding divergence to infinity, as described in \Sref{sec:parameterSetting}, and this update achieves good performance in experiments.
	Second, ADMM can treat regularized LS optimization problems, such as blind image deblurring (\Eref{eq:BlindImageDeconv}). We will describe this in the next section.

	\subsection{Regularized least squares problems}
	We consider a regularized LS problem as follows:
	\begin{equation}
		E\left(\bvec{\uptheta}\right) = \frac{1}{2}\lVert\bvec{y}-\bvec{f}\left(\bvec{\uptheta}\right)\rVert_2^2 + \gamma R\left(\bvec{\uptheta}\right).
		\label{eq:RegularizedLS}
	\end{equation}
	When we apply the RE algorithm in a straightforward manner, we can obtain the following objective function in each iteration:
	\begin{equation}
		E_t\left(\bvec{\uptheta}\right) = \frac{1}{2}\lVert\hat{\bvec{y}}^{(t)}-\bvec{f}\left(\bvec{\uptheta}\right)\rVert_2^2+ \gamma R\left(\bvec{\uptheta}\right).
		\label{eq:RegularizedLS2}
	\end{equation}
	In the case of ADMM, from \Eref{eq:ADMM_update2_1}, the objective function is as follows:
	\begin{equation}
		E_t\left(\bvec{\uptheta}\right) = \frac{\mu^{(t)}}{2}\lVert\hat{\bvec{y}}^{(t)}-\bvec{f}\left(\bvec{\uptheta}\right)\rVert_2^2+ \gamma R\left(\bvec{\uptheta}\right).
		\label{eq:RegularizedLS3}
	\end{equation}

	\begin{algorithm}[t]
		\caption{RE algorithm based on ADMM.}
		\begin{algorithmic}[1]
			\Input{Penalty parameter $\mu^{(t)}\rightarrow1$.}
			\Initialize{$t=0, \hat{\bvec{y}}^{(0)}=\bvec{y}, \bvec{r}^{(0)}=\bvec{0}$. }
			\While{not converged}
			\State{Update $\bvec{\uptheta}$ by \Eref{eq:RegularizedLS3}.}
			\State{$\bvec{r}^{(t+1)} = \mathrel{\left(\frac{\mu^{(t)}}{1+\mu^{(t)}}\right)\left(\bvec{y}-\bvec{f}\left(\bvec{\uptheta}^{(t+1)}\right)\right)+\left(\frac{1}{1+\mu^{(t)}}\right)\bvec{r}^{(t)} }$}
			\State{$\hat{\bvec{y}}^{(t+1)} = \bvec{y}+ \left(\frac{1-\mu^{(t)}}{\mu^{(t)}}\right)\bvec{r}^{(t+1)}$}
			\State{$t=t+1$}
			\EndWhile
			\Output{$\bvec{\uptheta}$}
		\end{algorithmic}
		\label{alg:REalgorithm2}
	\end{algorithm}

	We can find that the difference between \Eref{eq:RegularizedLS2} and \Eref{eq:RegularizedLS3} is simply the coefficient of the squared term. We find that minimizing \Eref{eq:RegularizedLS3} achieves better performance than minimizing \Eref{eq:RegularizedLS2}. 
	We summarize the RE algorithm based on ADMM in \Aref{alg:REalgorithm2}.

	\section{Implementation details}
	We used the RE algorithm based on ADMM (\Aref{alg:REalgorithm2}) unless otherwise stated. In the update of $\bvec{\uptheta}$ (line 2 in \Aref{alg:REalgorithm2}), we perform alternating optimization with a single iteration; for example, with k-means clustering, the cluster centers and assignments are updated only once. The four problems we treat in this paper can be minimized by alternating optimization.
	
	For the parameter $\mu$, we adapt $\mu^{(t+1)}=\min(\rho\mu^{(t)},1)$, where $\rho=\exp(-\log(\mu^{(0)})/T)$ to satisfy $\mu^{(T)}=1$. Therefore, we only need to determine the two parameters $\mu^{(0)}$ and $T$ in our method. 
	
	\begin{table*}[!t]
		\begin{center}
			\caption{Clustering results on synthetic data. Mean relative errors of the RE algorithm with different $\mu^{(0)}$ and $N$ are reported.}
			\label{tbl:KmeansSynthetic1}
			\subfloat[Synthetic data A with $k=100$.]{
				\scalebox{0.8}{
					\begin{tabular}{|l|c|c|c|c|} \hline
						& $\mu^{(0)}=0.5$ & $\mu^{(0)}=0.2$ & $\mu^{(0)}=0.1$ & $\mu^{(0)}=0.01$ \\ \hline
						$T=30$ & 0.905 & 0.894 & 0.902 & 0.921 \\ \hline
						$T=100$ & 0.854 & 0.856 & 0.862 & 0.876 \\ \hline
						$T=300$ & 0.843 & 0.844 & 0.846 & 0.854 \\ \hline
						$T=1,000$ & 0.837 & 0.839 & 0.840 & 0.843 \\ \hline
					\end{tabular}
				}
			}
			\subfloat[Synthetic data B with $k=10$.]{
				\scalebox{0.8}{
					\begin{tabular}{|l|c|c|c|c|} \hline
						& $\mu^{(0)}=0.5$ & $\mu^{(0)}=0.2$ & $\mu^{(0)}=0.1$ & $\mu^{(0)}=0.01$ \\ \hline
						$T=30$ & 2.699 & 1.758 & 1.493 & 0.998 \\ \hline
						$T=100$ & 2.784 & 1.209 & 0.789 & 0.630 \\ \hline
						$T=300$ & 2.722 & 1.036 & 0.708 & 0.552 \\ \hline
						$T=1,000$ & 2.749 & 0.994 & 0.630 & 0.552 \\ \hline
					\end{tabular}
				}
			}
		\end{center}
		\vspace{-7mm}
	\end{table*}
	
	\begin{table*}[!t]
		\begin{center}
			\caption{Clustering results on synthetic data. The mean / min / max relative errors and the average elapsed time are reported.}
			\label{tbl:KmeansSynthetic2}
			\subfloat[Synthetic data A with $k=100$.]{
				\scalebox{0.8}{
					\begin{tabular}{|l|c|c|c|c||c|} \hline
						\multicolumn{2}{|c|}{} & \multicolumn{3}{|c||}{Relative error} & \multirow{2}{14mm}{Elapsed time [sec]} \\ \cline{3-5}
						\multicolumn{2}{|c|}{} & Mean & Min & Max & \\ \hline
						\multicolumn{2}{|c|}{Random seeding} & 1.246 & 1.102 & 1.473 & 0.058 \\ \hline
						\multicolumn{2}{|c|}{k-means++~\cite{arthur2007k}} & 1.000 & 0.944 & 1.081 & 0.244 \\ \hline
						\multicolumn{2}{|c|}{Hartigan's algorithm~\cite{hartigan1975clustering}} & 0.925 & 0.881 & 0.981 & 0.359 \\ \hline
						\multirow{4}{18mm}{RE algorithm ($\mu^{(0)}=0.1$)} & $T=30$ & 0.902 & 0.875 & 0.942 & 0.258 \\ \cline{2-6}
						& $T=100$ & 0.862 & 0.846 & 0.873 & 0.780 \\ \cline{2-6}
						& $T=300$ & 0.846 & 0.836 & 0.856 & 2.29 \\ \cline{2-6}
						& $T=1,000$ & 0.840 & 0.831 & 0.850 & 7.61 \\ \hline 					
					\end{tabular}
				}
			}
			\subfloat[Synthetic data B with $k=10$.]{
				\scalebox{0.8}{
					\begin{tabular}{|l|c|c|c|c||c|} \hline
						\multicolumn{2}{|c|}{} & \multicolumn{3}{|c||}{relative error} & \multirow{2}{14mm}{elapsed time [sec]} \\ \cline{3-5}
						\multicolumn{2}{|c|}{} & mean & min & max & \\ \hline
						\multicolumn{2}{|c|}{Random seeding} & 4.277 & 0.552 & 22.680 & 0.0194 \\ \hline
						\multicolumn{2}{|c|}{k-means++~\cite{arthur2007k}} & 1.000 & 0.552 & 8.743 & 0.0271 \\ \hline
						\multicolumn{2}{|c|}{Hartigan's algorithm~\cite{hartigan1975clustering}} & 1.000 & 0.552 & 8.743 & 0.0429 \\ \hline
						\multirow{4}{18mm}{RE algorithm ($\mu^{(0)}=0.1$)} & $T=30$ & 1.493 & 0.552 & 6.777 & 0.0699 \\ \cline{2-6}
						& $T=100$ & 0.789 & 0.552 & 2.500 & 0.176 \\ \cline{2-6}
						& $T=300$ & 0.708 & 0.552 & 2.500 & 0.473 \\ \cline{2-6}
						& $T=1,000$ & 0.630 & 0.552 & 2.500 & 1.51 \\ \hline
					\end{tabular}
				}
			}
		\end{center}
		\vspace{-9mm}
	\end{table*}
	
	\section{Experimental results}
	We evaluate the performance of the RE algorithm on four nonconvex LS problems: k-means clustering, 3D point set registration, OPQ, and single blind image deblurring. All experiments were executed on an Intel Core i5-4200U CPU (1.60 GHz) with 8GB of RAM, and were implemented in MATLAB\footnote{Our codes will be available if the paper is accepted.} except for Go-ICP~\cite{yang2013go}. For Go-ICP and its comparison experiment, we used the publicly available code\footnote{http://iitlab.bit.edu.cn/mcislab/\~{}yangjiaolong/go-icp/} implemented in C++.

	\subsection{K-means clustering}
\setlength{\textfloatsep}{20pt}
	We compared our method with k-means++~\cite{arthur2007k}, which is a good initialization method, and Hartigan's algorithm~\cite{hartigan1975clustering}. For Hartigan's algorithm, we first used Lloyd's algorithm~\cite{lloyd1982least} with k-means++ initialization for fast computation. We reported the total time of Lloyd's algorithm and Hartigan's algorithm. For the other method, we used Lloyd's algorithm for optimization. We used random initialization for the RE algorithm.
	For error measurement, we used the objective function value of \Eref{eq:kmeans} and reported relative error from the value of k-means++ (therefore, the relative error of k-means++ is always 1). 
	
	\subsubsection{Synthetic data experiments}
	\label{sec:kmeansSynthetic}

	We start with two synthetic datasets as shown in \Fref{fig:KmeansSynthetic}. We repeated each method 50 times from different initializations and report the average relative errors. \Tref{tbl:KmeansSynthetic1} shows the results of our method with different $\mu^{(0)}$ and $T$. We found that larger $T$ achieved better performance. We also found that smaller $\mu^{(0)}$ achieved better performance in dataset B; however, larger $\mu^{(0)}$ achieved better performance in dataset~A. This indicates that the best setting $\mu^{(0)}$ is different for different data distributions. Intuitively, dataset B requires a larger residual expansion (in other words, small $\mu^{(0)}$) to escape from a poor local minimum, while dataset A requires a smaller residual expansion.
	
	\begin{figure}[t]
		\vspace{-2mm}
		\begin{center}
			\hfill
			\subfloat[Synthetic dataset A.]{\includegraphics[width=0.4\linewidth]{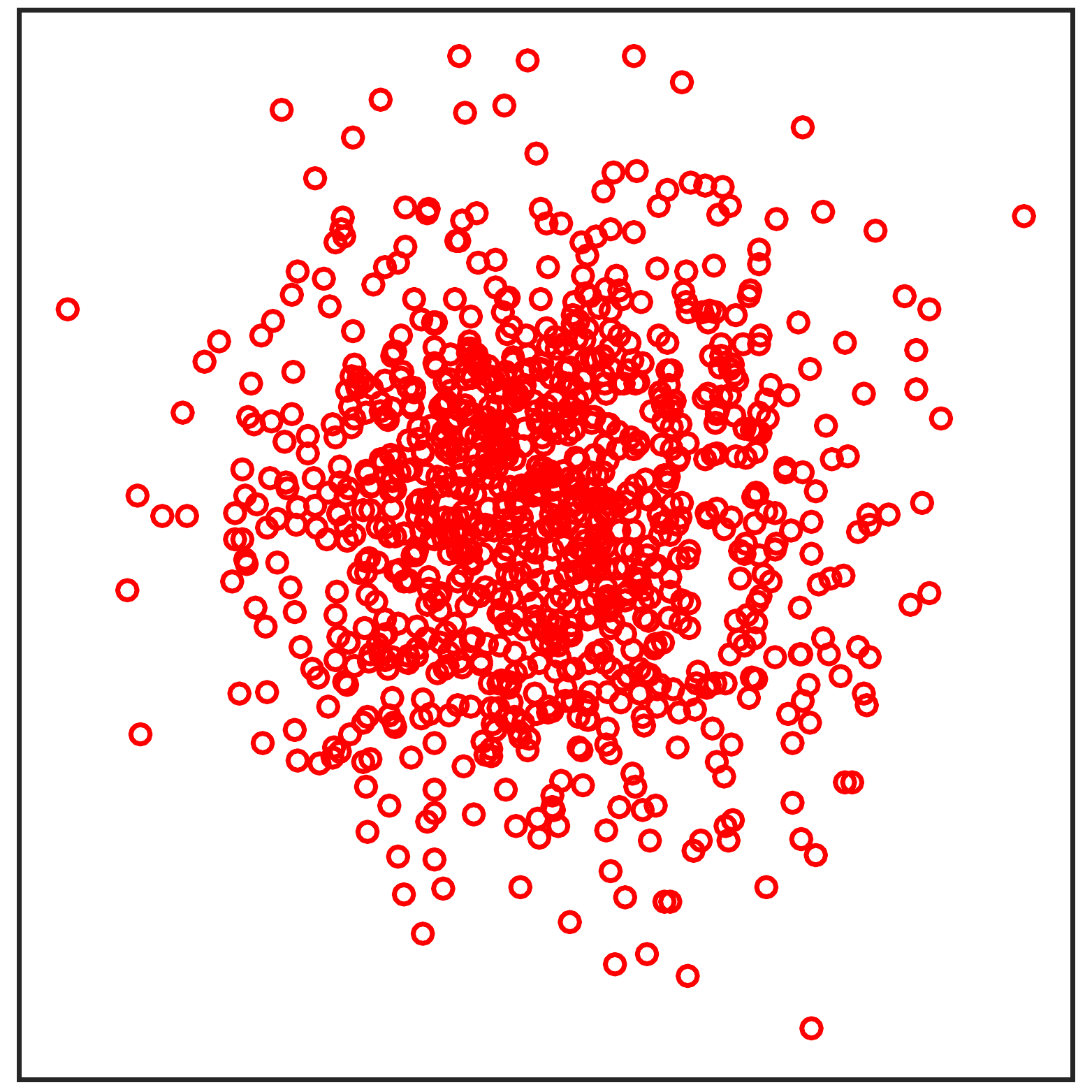}
				\label{fig:KmeansSynthetic1}} 
			\hfill
			\subfloat[Synthetic dataset B.]{\includegraphics[width=0.4\linewidth]{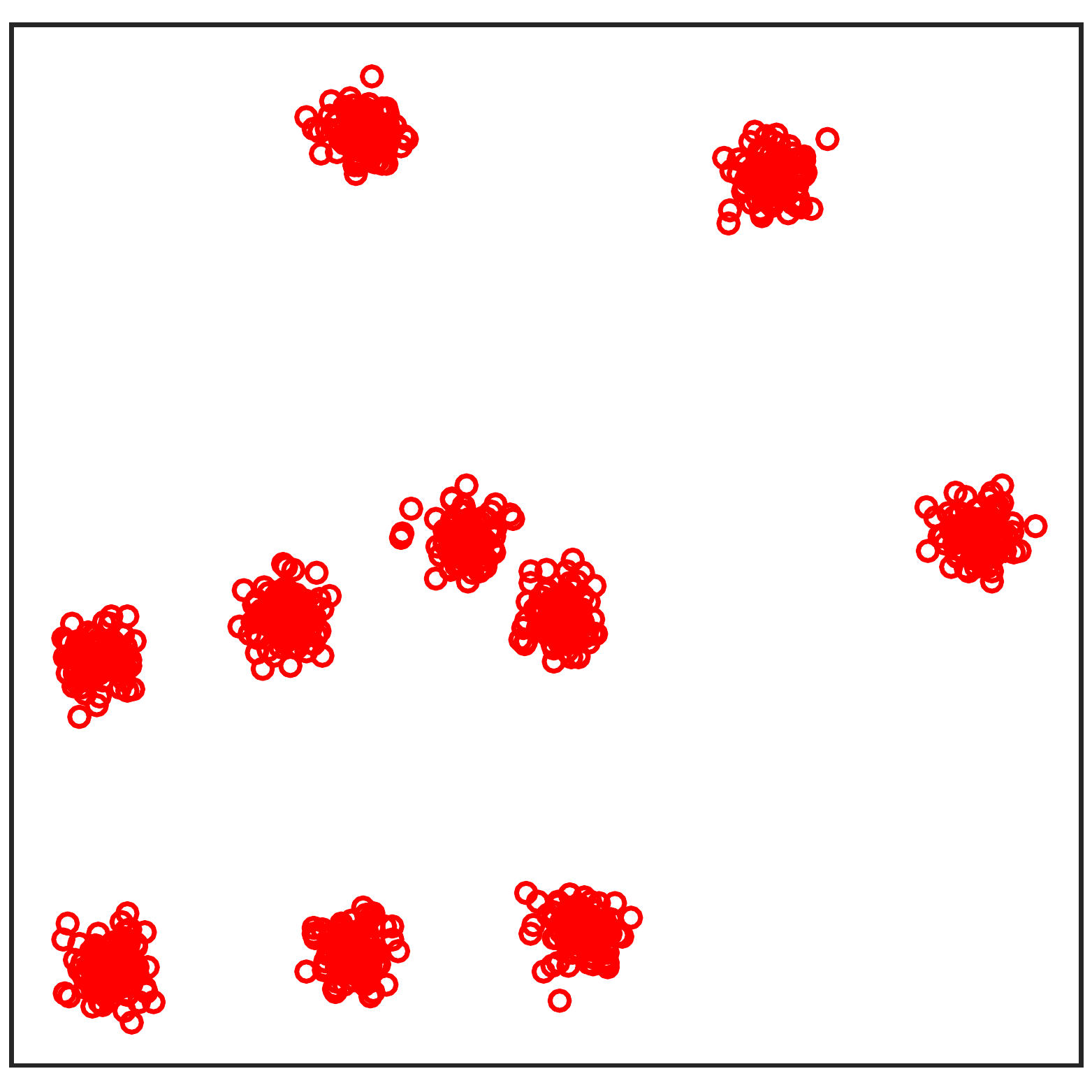}
				\label{fig:KmeansSynthetic2}} 
			\hfill\null
		\end{center}
		\vspace{-3mm}
		\caption{Synthetic data (1,000 two-dimensional points).}
		\label{fig:KmeansSynthetic}
		\vspace{-3mm}
	\end{figure}
	We show comparison results in \Tref{tbl:KmeansSynthetic2}. We repeated each method 50 times from different initializations. K-means++ worked well with dataset B. On the other hand, Hartigan's algorithm can improve the results of k-means++ in dataset A; however, this does not work in dataset B. The RE algorithm worked best in both cases, even though it was initialized by random seeding. Moreover, the RE algorithm with $T=30$ achieved comparable speed to k-means++ with better performance for dataset A.
	
	\subsubsection{Real-world data experiments}
		\begin{table*}[!t]
			\begin{center}
				\caption{Clustering results on real data. The mean / min / max of the relative error and the average elapsed time are reported.}
				\label{tbl:KmeansRealworld1}
				\vspace{-2mm}
				\subfloat[Cloud dataset ($\bmat{X}\in\mathbf{R}^{10\times1024}$) with $k=10$.]{
					\scalebox{0.8}{
						\begin{tabular}{|l|c|c|c|c||c|} \hline
							\multicolumn{2}{|c|}{} & \multicolumn{3}{|c||}{Relative error} & \multirow{2}{14mm}{Elapsed time [sec]} \\ \cline{3-5}
							\multicolumn{2}{|c|}{} & Mean & Min & Max & \\ \hline
							\multicolumn{2}{|c|}{Random seeding} & 1.255 & 1.003 & 1.438 & 0.0444 \\ \hline
							\multicolumn{2}{|c|}{k-means++~\cite{arthur2007k}} & 1.000 & 0.920 & 1.097 & 0.0395 \\ \hline
							\multicolumn{2}{|c|}{Hartigan's algorithm~\cite{hartigan1975clustering}} & 0.994 & 0.920 & 1.093 & 0.0593 \\ \hline
							\multirow{4}{18mm}{RE algorithm ($\mu^{(0)}=0.1$)} & $T=30$ & 0.980 & 0.920 & 1.031 & 0.0719 \\ \cline{2-6}
							& $T=100$ & 0.941 & 0.920 & 0.986 & 0.183 \\ \cline{2-6}
							& $T=300$ & 0.926 & 0.920 & 0.983 & 0.516 \\ \cline{2-6}
							& $T=1,000$ & 0.920 & 0.920 & 0.921 & 1.63 \\ \hline 					
						\end{tabular}
					}
				}
				\subfloat[COIL20 dataset ($\bmat{X}\in\mathbf{R}^{1024\times1440}$) with $k=20$.]{
					\vspace{-2mm}
					\scalebox{0.8}{
						\begin{tabular}{|l|c|c|c|c||c|} \hline
							\multicolumn{2}{|c|}{} & \multicolumn{3}{|c||}{Relative error} & \multirow{2}{14mm}{Elapsed time [sec]} \\ \cline{3-5}
							\multicolumn{2}{|c|}{} & Mean & Min & Max & \\ \hline
							\multicolumn{2}{|c|}{Random seeding} & 0.999 & 0.953 & 1.076 & 2.22 \\ \hline
							\multicolumn{2}{|c|}{k-means++~\cite{arthur2007k}} & 1.000 & 0.962 & 1.038 & 3.52 \\ \hline
							\multicolumn{2}{|c|}{Hartigan's algorithm~\cite{hartigan1975clustering}} & 0.990 & 0.960 & 1.021 & 8.07 \\ \hline
							\multirow{4}{18mm}{RE algorithm ($\mu^{(0)}=0.1$)} & $T=30$ & 0.951 & 0.939 & 0.977 & 4.37 \\ \cline{2-6}
							& $T=100$ & 0.945 & 0.938 & 0.960 & 12.6 \\ \cline{2-6}
							& $T=300$ & 0.942 & 0.938 & 0.950 & 36.6 \\ \cline{2-6}
							& $T=1,000$ & 0.941 & 0.938 & 0.956 & 119 \\ \hline
						\end{tabular}
					}
				}
			\end{center}
			\vspace{-10mm}
		\end{table*}
		
	We used two real-world datasets for comparison: the cloud dataset\footnote{https://archive.ics.uci.edu/ml/datasets/Cloud} and the COIL20 dataset~\cite{nene1996columbia}. We performed experiments in the same manner as in \Sref{sec:kmeansSynthetic}.
	
	\Tref{tbl:KmeansRealworld1} shows comparative results. In the cloud dataset, k-means++ achieves faster and better clustering than random seeding. The RE algorithm with $T=30$ achieved better clustering than k-means++ with about 1.8 times the computational cost. The RE algorithm with $T=1000$ performed best, and found the near-global optimum in every case. For the COIL20 dataset, although k-means++ and Hartigan's algorithm did not work well, the RE algorithm significantly outperformed the other methods.

	\subsection{Point set registration}
	\begin{table}[tbp]
		\begin{center}
			\caption{Point set registration results. We reported the number of successes with each different rotation angle. We also reported the average elapsed time for all 150 point sets.}
			\vspace{-2mm}
			\label{tbl:ICPResults1}
			\scalebox{0.7}{
				\begin{tabular}{|l|c|c|c|c||c|} \hline
					\multicolumn{2}{|c|}{} & \multicolumn{3}{|c||}{Number of successes} &\multirow{2}{16mm}{Number of iterations} \\ \cline{3-5}
					\multicolumn{2}{|c|}{} & $\phi=\pi/3$ & $\phi=5\pi/12$ & $\phi=\pi/2$ & \\ \hline
					\multicolumn{2}{|c|}{ICP algorithm} & 26 & 4 & 1 & 46.7 \\ \hline
					\multirow{4}{18mm}{RE algorithm ($\mu^{(0)}=0.1$)} & $T=30$ & 46 & 25 & 2 & 47.4 \\ \cline{2-6}
					& $T=100$ & 49 & 31 & 5 & 110.1 \\ \cline{2-6}
					& $T=300$ & 49 & 33& 6 & 310.2 \\ \cline{2-6}
					& $T=1,000$ & 49 & 36& 6 & 1008 \\ \hline
				\end{tabular}
			}	
		\end{center}
		\vspace{-7mm}
	\end{table}

	\begin{figure}[t]
		\begin{center}
			\hfill
			\subfloat[Source model (500 points). ]{\includegraphics[width=0.4\linewidth]{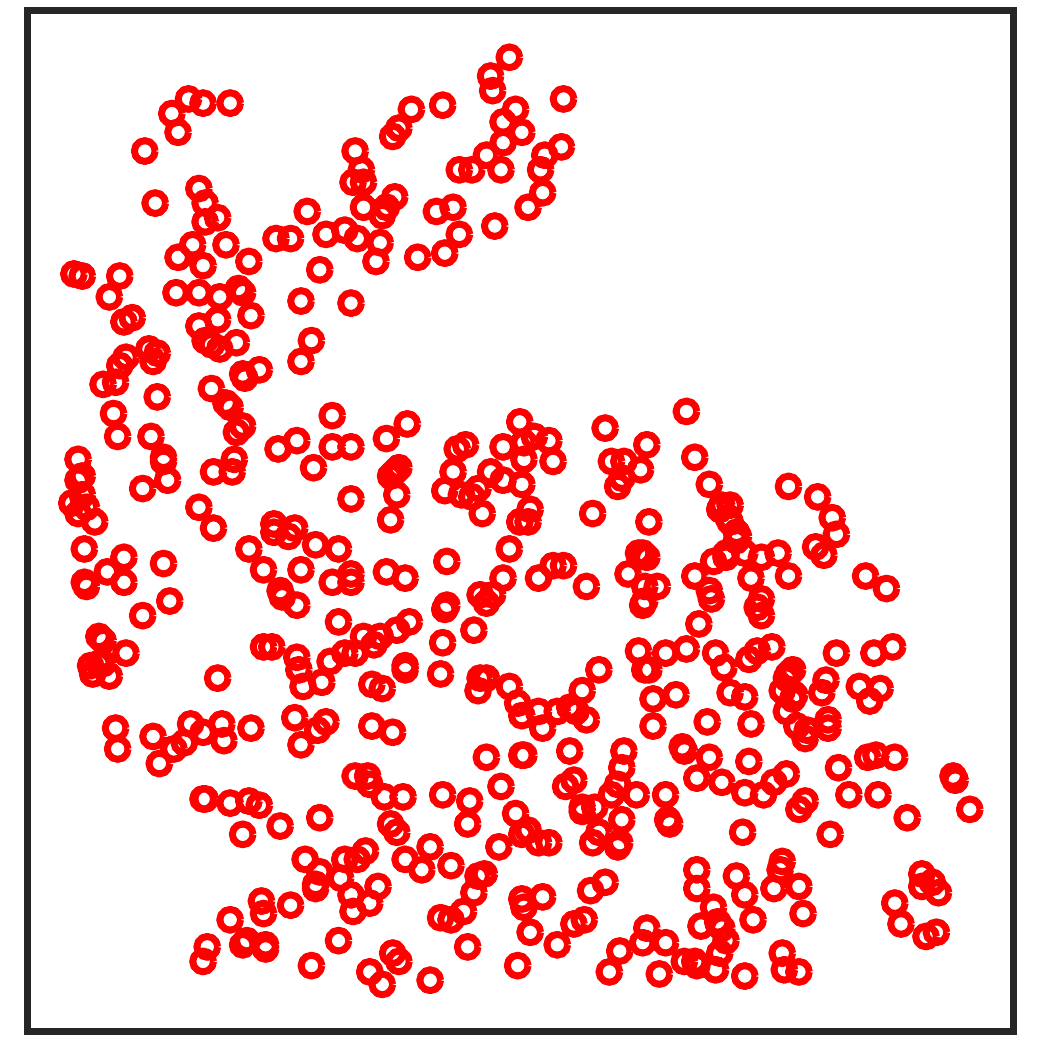}
				\label{fig:Bunny1}} 
			\hfill
			\subfloat[Target model (313 points).]{\includegraphics[width=0.4\linewidth]{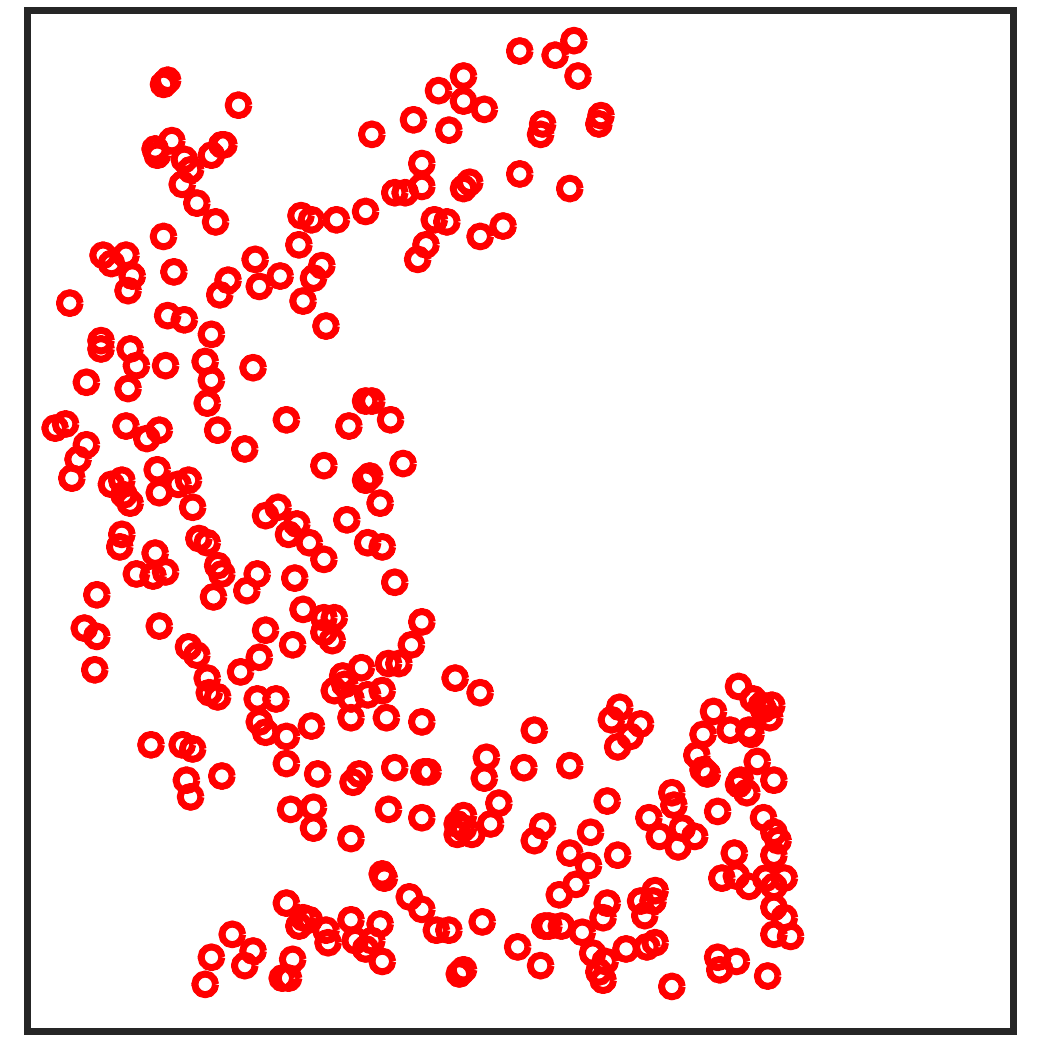}
				\label{fig:Bunny2}} 
			\hfill\null
		\end{center}
		\vspace{-5mm}
		\caption{Point set models.}
		\label{fig:Bunny}
		\vspace{-3mm}
	\end{figure}

	We compared the RE algorithm with the ICP algorithm and Go-ICP~\cite{yang2013go}. Go-ICP is known as a method that can achieve global optimization. We used the bunny model from the Stanford3D dataset\footnote{http://graphics.stanford.edu/data/3Dscanrep/}, as in \Fref{fig:Bunny}. For the target model, we used a partial point set as in \Fref{fig:Bunny2}. In the experiments, point sets were normalized within a cube of $[-1,1]^3$. 
	
	We made a rotation matrix $\bmat{R}_{gt}$ from a random rotation axis and the rotation angle $\phi$. The target point set was constructed by this rotation matrix, and we added Gaussian noise with a standard deviation of $\sigma=0.03$. We performed 50 tests with different random rotation axes at each rotation angle $\phi=\pi/3, 5\pi/12, \pi/2$. For measurement of the error, we used the objective value \Eref{eq:ICP} and regarded the results as successful if the objective error was less than 1 (this value is approximately twice of the average objective value using Go-ICP, as in \Fref{fig:GoICPResults}).
	
	We first show the comparison results between the RE algorithm and the ICP algorithm as in \Tref{tbl:ICPResults1}. The RE algorithm with $T=30$ achieved a better success rate with almost the same number of iterations as the ICP algorithm. Using a large $T$ can improve the results to a small extent.
	
	We also compare our method to Go-ICP~\cite{yang2013go}. Go-ICP has two steps: the ICP algorithm and the branch-and-bound algorithm. We compared the original Go-ICP and RE + Go-ICP, which has the two steps of ICP with the RE and branch-and-bound algorithms. \Fref{fig:GoICPResults} plots all 50 results in $\phi=5\pi/12$. Note that this comparison was implemented entirely in C++. Go-ICP always found the global optimum solution; however, it required significant computation. RE + Go-ICP reduced computational cost while achieving global optimization.
	
\begin{figure}[t]
	\centering
	\includegraphics[width=0.9\linewidth]{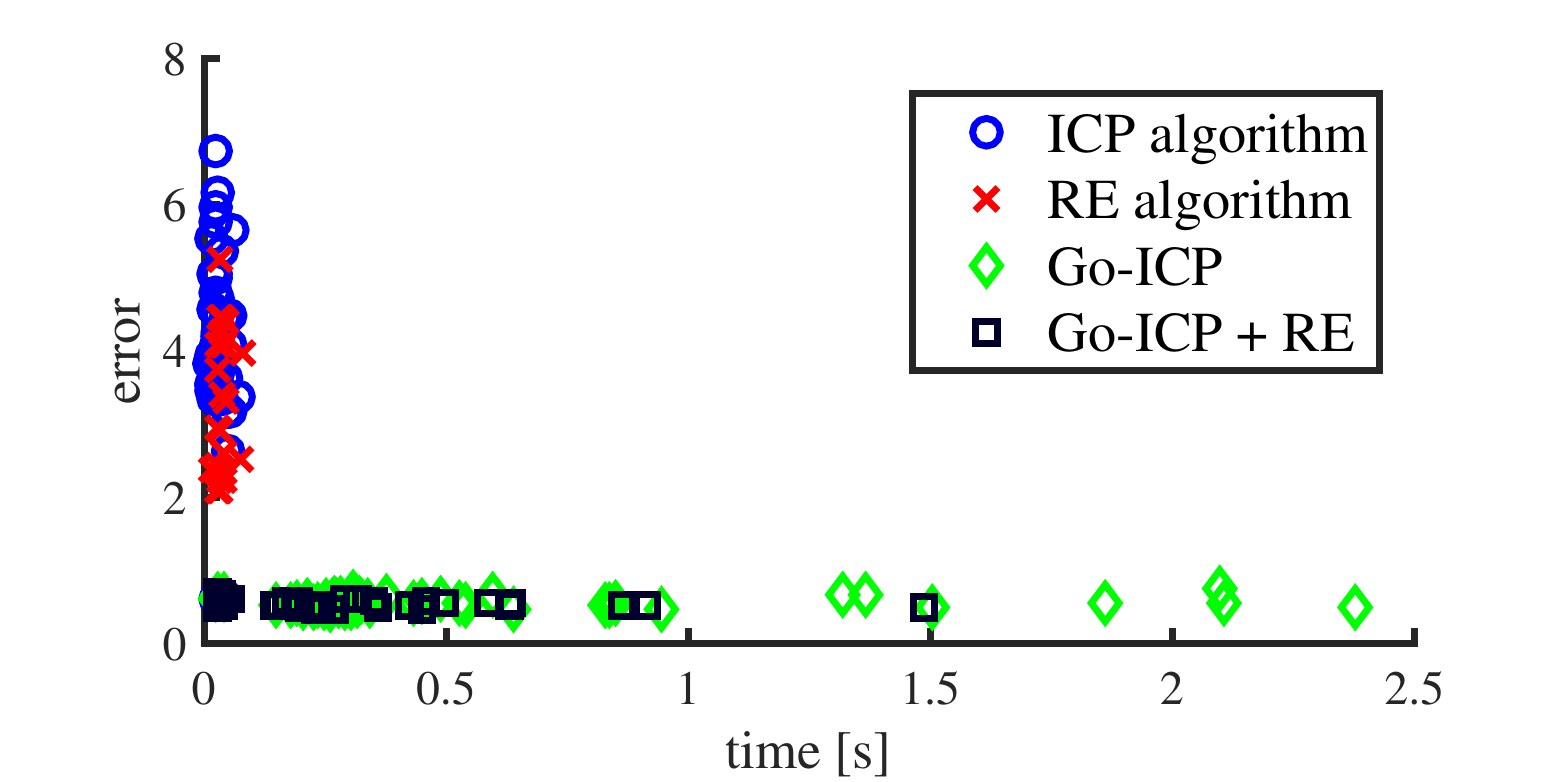}
	\vspace{-3mm}
	\caption{Point set registration results with $\phi=5\pi/12$. We plotted the objective value and computation time over 50 trials. For the RE algorithm, we set $T=30$. The average computational times for the ICP algorithm, RE algorithm, Go-ICP, and RE + Go-ICP were 0.0304, 0.0347, 0.551, and 0.231 seconds, respectively. }
	\label{fig:GoICPResults}
	\vspace{-3mm}
\end{figure}

\begin{table*}[tbp]
	\begin{center}
		\caption{Deblurring results. We reported PSNR values [dB] for each method. The best and second-best results are highlighted in bold and italics, respectively.}
		\label{tbl:DeblurResults1}
		\vspace{-2mm}
		\scalebox{0.75}{
			\begin{tabular}{l|cccc|cccc|cccc|cccc|cccc} \hline
				Image & \multicolumn{4}{|c|}{\#1} & \multicolumn{4}{|c|}{\#2} & \multicolumn{4}{|c|}{\#3} & \multicolumn{4}{|c|}{\#4} & \multicolumn{4}{|c}{\#5} \\ \hline
				Kernel & \#1 & \#2 & \#3 & \#4 & \#1 & \#2 & \#3 & \#4 & \#1 & \#2 & \#3 & \#4 & \#1 & \#2 & \#3 & \#4 & \#1 & \#2 & \#3 & \#4 \\ \hline\hline
				Pan \etal~\cite{pan2014deblurring} & 13.6 & 13.2 & 20.3 & 13.1 & 14.9 & 16.5 & 14.1 & {\bf 11.2} & 11.8 & {\it 20.6} & 19.4 & {\it 10.0} & 13.7 & 11.8 & {\it 19.1} & 11.9 & 19.6 & {\bf 19.2} & 16.8 & {\it 14.1} \\ \hline
				\Aref{alg:REalgorithm} & {\bf 20.2} & {\bf 20.2} & {\bf 21.3} & {\bf 16.4} & {\it 17.0} & {\it 16.9} & {\bf 15.8} & 7.4 & {\bf 20.9} & 18.8 & {\it 20.4} & 6.5 & {\bf 23.6} & {\it 20.4} & {\bf 19.7} & {\bf 12.5} & {\it 21.3} & {\it 18.2} & {\it 17.5} & 9.4 \\ \hline
				\Aref{alg:REalgorithm2} & {\bf 20.2} & {\it 19.6} & {\it 20.5} & {\it 15.7} & {\bf 17.2} & {\bf 17.1} & {\bf 15.8} & {\it 8.4} & {\it 19.5} & {\bf 21.5} & {\bf 20.7} & {\bf 14.2} & {\bf 23.6} & {\bf 20.5} & {\it 19.1} & {\it 11.9} & {\bf 21.9} & 17.7 & {\bf 18.6} & {\bf 15.2} \\ \hline
			\end{tabular}
		}	
	\end{center}
	\vspace{-9mm}
\end{table*}

\setcounter{figure}{8}  
\begin{figure*}[t]
	\begin{center}
		\subfloat[Latent image.]{
			\includegraphics[width=0.24\linewidth]{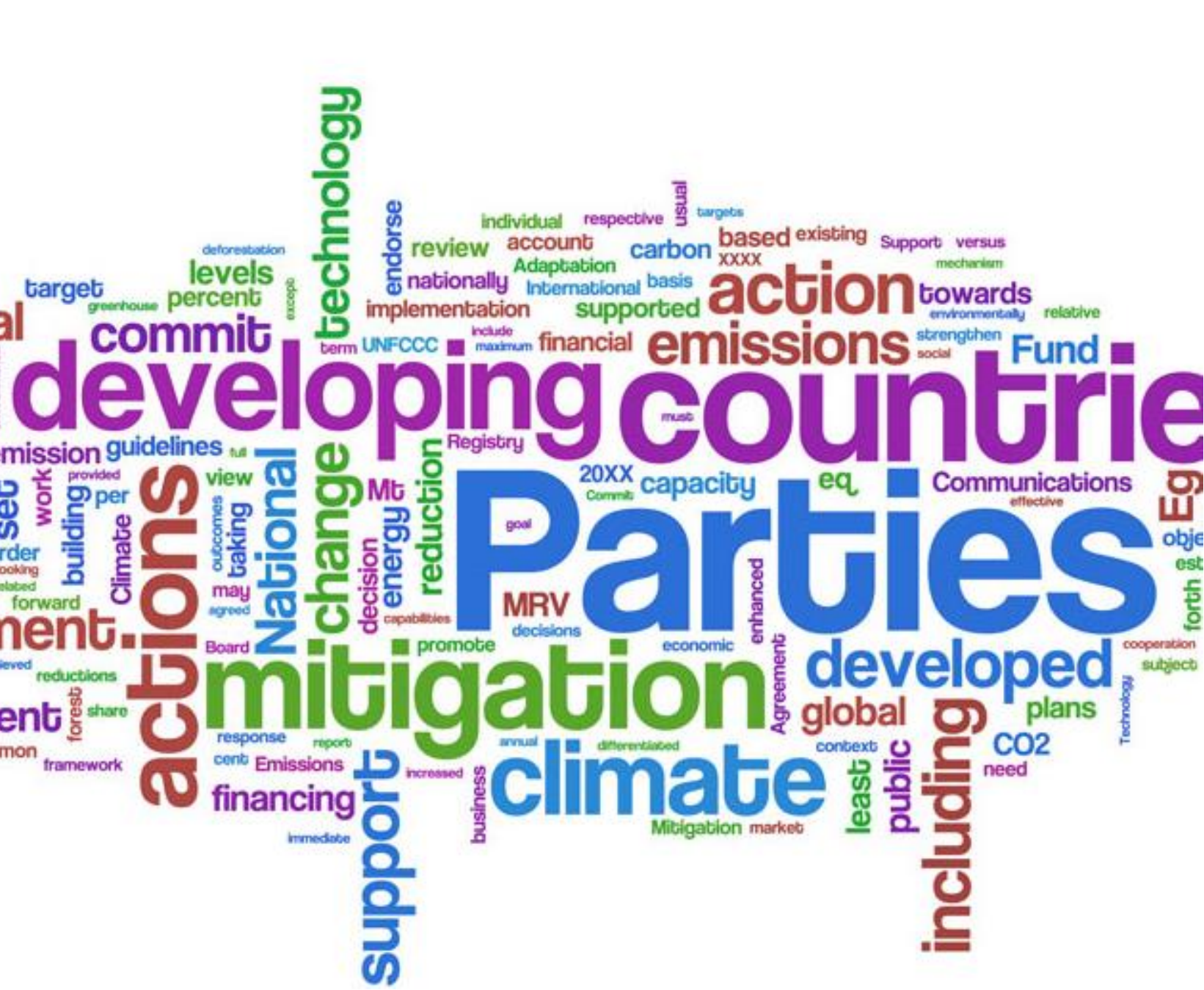}}
		\subfloat[Blurred image.]{
			\includegraphics[width=0.24\linewidth]{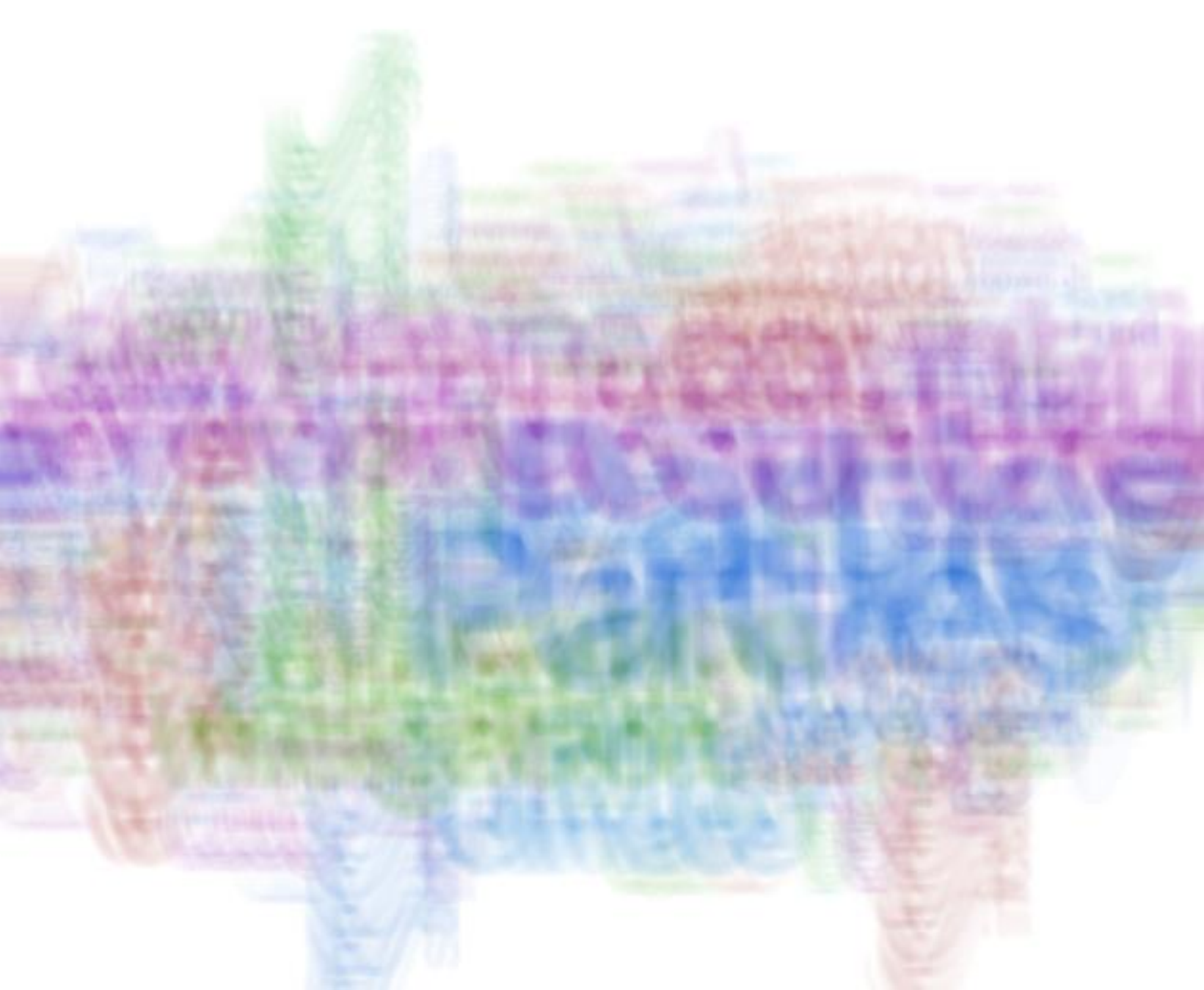}}
		\subfloat[Result using \cite{pan2014deblurring}.]{
			\includegraphics[width=0.24\linewidth]{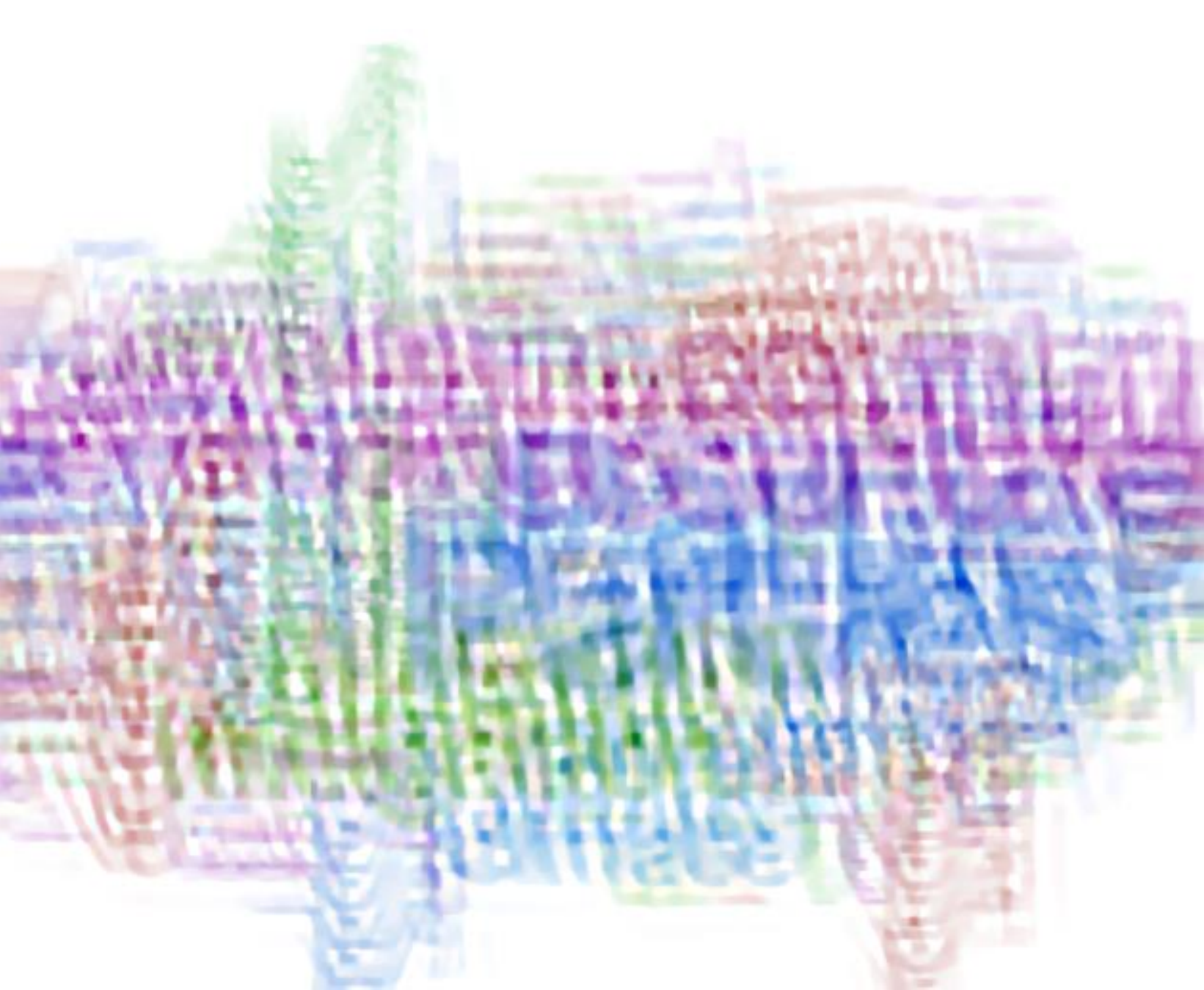}}
		\subfloat[Result using \Aref{alg:REalgorithm2}.]{
			\includegraphics[width=0.24\linewidth]{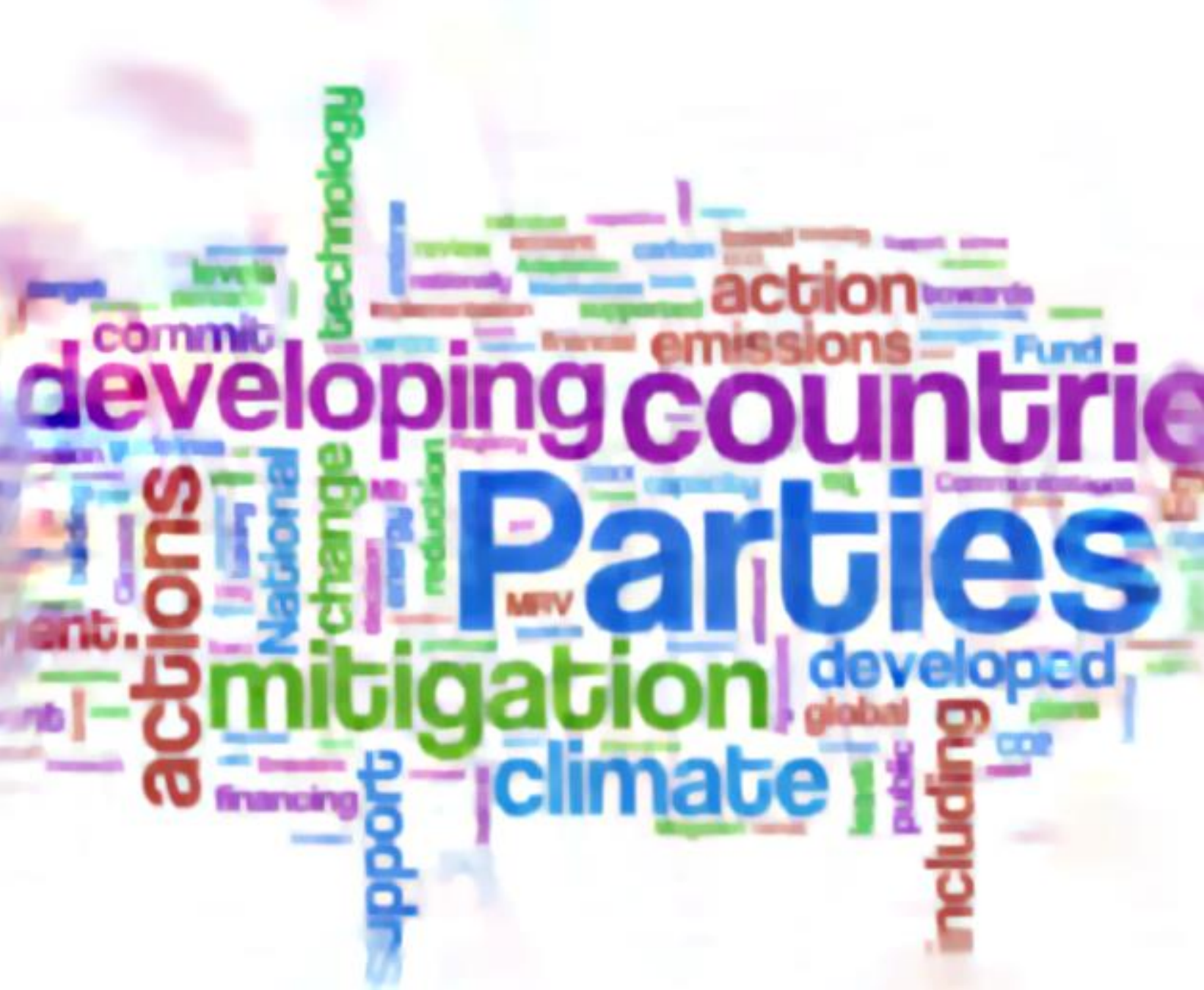}}		
	\end{center}
	\vspace{-4mm}
	\caption{An example of the results of deblurring (image \#1, kernel \#4).}
	\vspace{-3mm}
	\label{fig:DeblurFig}
\end{figure*}

	\subsection{Optimized product quantization}
	We show that the RE algorithm is successful in OPQ optimization problems. We compare our method with the alternating optimization method~\cite{ge2013optimized,norouzi2013cartesian}. For a dataset, we used SIFT 1M~\cite{jegou2011product}, which contains 100,000 128-dimensional SIFT descriptors for training. We set the subspace number $M=8$ and cluster number $k=256$, which are often used in the field of approximate nearest neighbor search. For error measurement, we used the objective function value of \Eref{eq:OPQ}. For our method, we set $\mu^{(0)}=0.5$.

	We plot the objective function value versus iteration number in \Fref{fig:OPQObjVsIter}. We performed five repetitions using different initializations and report the average objective values obtained. Our method improved the objective function value; moreover, it achieved rapid convergence in the cases of $T=30$ and $T=100$. The RE algorithm elevates the objective function around the current solution; in other words, it transforms the gradient for the current solution into a steeper gradient, potentially causing rapid convergence.

	\subsection{Blind image deblurring}	
	We evaluated our method with single blind image deblurring. There are many formulations for blind image deblurring. In this paper, we followed Pan \etal's formulation~\cite{pan2014deblurring}, which can be minimized by alternating optimization. We compared three methods as follows: a coarse-to-fine strategy~\cite{pan2014deblurring} and RE algorithms based on \Aref{alg:REalgorithm} and \Aref{alg:REalgorithm2}. We used the uniform blurred text images from the dataset provided by Lai \etal~\cite{Lai_2016_CVPR}, which contains five latent images and four blurring kernels for a total of 20 blurred images. For all methods, we used the same objective function parameters, such as the regularization coefficients. For our method, we set $\mu^{(0)} = 0.2$ and $T=100$.
	\setcounter{figure}{7}  
			\begin{figure}[t]
				\centering
				\includegraphics[width=1.0\linewidth]{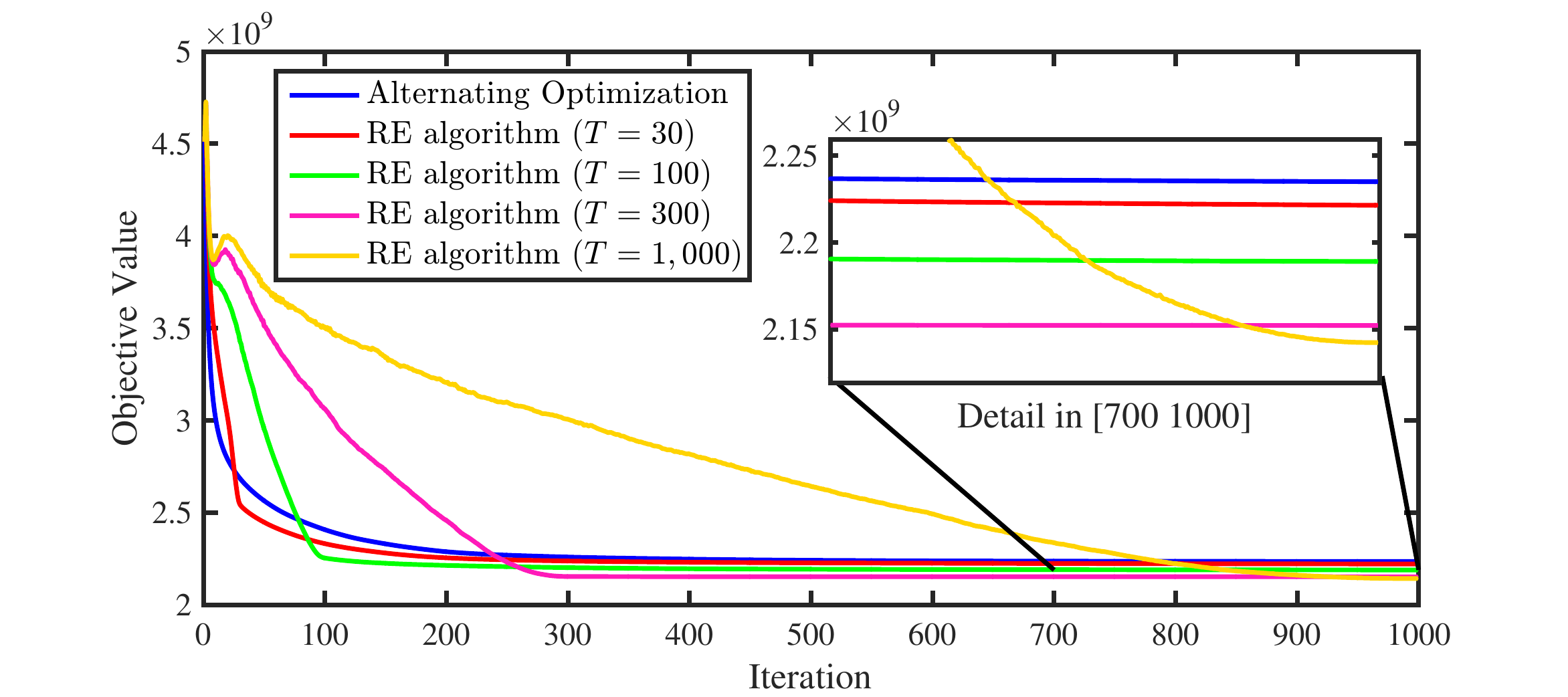}
				\vspace{-3mm}
				\caption{Objective value of \Eref{eq:OPQ} versus iteration number of OPQ optimization. We report average results over five different initializations.}
				\label{fig:OPQObjVsIter}
				\vspace{-3mm}
			\end{figure}

	We show the results in \Tref{tbl:DeblurResults1}. Our method significantly outperforms Pan's method~\cite{pan2014deblurring} and is successful for a significantly blurred image, as in \Fref{fig:DeblurFig}. We found that \Aref{alg:REalgorithm2} is superior to \Aref{alg:REalgorithm} in the cases of \{image \#3, kernel \#4\} and \{image \#5, kernel \#4\}. Note that these results are obtained by minimizing the same objective function, however using different optimization methods. Therefore our method likely improves upon other methods which use different objective functions~\cite{krishnan2011blind,xu2013unnatural}.

	\section{Conclusion}
	We proposed the RE algorithm, which is a novel global optimization algorithm for nonconvex LS problems. This method is based on a novel measurement of global convergence called RE convergence. We presented theoretical analysis of RE convergence and empirical results showing excellent performance of the RE algorithm for various optimization problems.
	
	There remain many open questions in both theoretical and empirical aspects. We can guarantee that the solution with the largest RE constant is the global optimum in limited cases. To which problems this applies remains unknown. We plan to investigate the applicability of the RE algorithm to other nonconvex optimization problems, including non-LS problems.
	
	\section*{Acknowledgement}
	This research is partially supportted by CREST (JPMJCR1686)
	and KAKENHI（15K12025）

{\small
\bibliographystyle{ieee}
\bibliography{egbib}

\begin{thebibliography}{10}\itemsep=-1pt

\bibitem{achanta2012slic}
R.~Achanta, A.~Shaji, K.~Smith, A.~Lucchi, P.~Fua, and S.~S{\"u}sstrunk.
\newblock {SLIC} superpixels compared to state-of-the-art superpixel methods.
\newblock {\em TPAMI}, 34(11):2274--2282, 2012.

\bibitem{arthur2007k}
D.~Arthur and S.~Vassilvitskii.
\newblock k-means++: The advantages of careful seeding.
\newblock In {\em SODA}, pages 1027--1035, 2007.

\bibitem{besl1992method}
P.~J. Besl and H.~D. McKay.
\newblock A method for registration of 3-{D} shapes.
\newblock {\em TPAMI}, 14(2):239--256, 1992.

\bibitem{blais1995registering}
G.~Blais and M.~D. Levine.
\newblock Registering multiview range data to create {3D} computer objects.
\newblock {\em TPAMI}, 17(8):820--824, 1995.

\bibitem{blake1987visual}
A.~Blake and A.~Zisserman.
\newblock {\em Visual reconstruction}.
\newblock MIT Press, 1987.

\bibitem{boyd2011distributed}
S.~Boyd, N.~Parikh, E.~Chu, B.~Peleato, and J.~Eckstein.
\newblock Distributed optimization and statistical learning via the alternating
  direction method of multipliers.
\newblock {\em Foundations and Trends in Machine Learning}, 3(1):1--122, 2011.

\bibitem{cho2009fast}
S.~Cho and S.~Lee.
\newblock Fast motion deblurring.
\newblock {\em ACM ToG}, 28(5):145:1--145:8, 2009.

\bibitem{csurka2004visual}
G.~Csurka, C.~Dance, L.~Fan, J.~Willamowski, and C.~Bray.
\newblock Visual categorization with bags of keypoints.
\newblock In {\em In Workshop on Statistical Learning in Computer Vision,
  ECCV}, pages 1--22, 2004.

\bibitem{ge2013optimized}
T.~Ge, K.~He, Q.~Ke, and J.~Sun.
\newblock Optimized product quantization for approximate nearest neighbor
  search.
\newblock In {\em CVPR}, pages 2946--2953, 2013.

\bibitem{hartigan1975clustering}
J.~A. Hartigan.
\newblock {\em Clustering algorithms}.
\newblock John Wiley \& Sons, Inc., 1975.

\bibitem{jegou2011product}
H.~Jegou, M.~Douze, and C.~Schmid.
\newblock Product quantization for nearest neighbor search.
\newblock {\em TPAMI}, 33(1):117--128, 2011.

\bibitem{kennedy2011particle}
J.~Kennedy and R.~Eberhart.
\newblock Particle swarm optimization.
\newblock In {\em ICNN}, volume~4, pages 1942--1948, 1995.

\bibitem{kirkpatrick1983optimization}
S.~Kirkpatrick, M.~P. Vecchi, et~al.
\newblock Optimization by simmulated annealing.
\newblock {\em science}, 220(4598):671--680, 1983.

\bibitem{krishna1999genetic}
K.~Krishna and M.~N. Murty.
\newblock Genetic {K}-means algorithm.
\newblock {\em IEEE Transactions on Systems, Man, and Cybernetics, Part B
  (Cybernetics)}, 29(3):433--439, 1999.

\bibitem{krishnan2011blind}
D.~Krishnan, T.~Tay, and R.~Fergus.
\newblock Blind deconvolution using a normalized sparsity measure.
\newblock In {\em CVPR}, pages 233--240, 2011.

\bibitem{Lai_2016_CVPR}
W.-S. Lai, J.-B. Huang, Z.~Hu, N.~Ahuja, and M.-H. Yang.
\newblock A comparative study for single image blind deblurring.
\newblock In {\em CVPR}, June 2016.

\bibitem{lloyd1982least}
S.~Lloyd.
\newblock Least squares quantization in {PCM}.
\newblock {\em TIT}, 28(2):129--137, 1982.

\bibitem{luck2000registration}
J.~Luck, C.~Little, and W.~Hoff.
\newblock Registration of range data using a hybrid simulated annealing and
  iterative closest point algorithm.
\newblock In {\em ICRA}, volume~4, pages 3739--3744, 2000.

\bibitem{mitchell1998introduction}
M.~Mitchell.
\newblock {\em An introduction to genetic algorithms}.
\newblock MIT press, 1998.

\bibitem{mobahi2015link}
H.~Mobahi and J.~W. Fisher~III.
\newblock On the link between gaussian homotopy continuation and convex
  envelopes.
\newblock In {\em International Workshop on Energy Minimization Methods in
  CVPR}, pages 43--56, 2015.

\bibitem{nene1996columbia}
S.~A. Nene, S.~K. Nayar, H.~Murase, et~al.
\newblock Columbia object image library ({COIL}-20).
\newblock Technical report.

\bibitem{norouzi2013cartesian}
M.~Norouzi and D.~J. Fleet.
\newblock Cartesian k-means.
\newblock In {\em CVPR}, pages 3017--3024, 2013.

\bibitem{pan2014deblurring}
J.~Pan, Z.~Hu, Z.~Su, and M.-H. Yang.
\newblock Deblurring text images via l0-regularized intensity and gradient
  prior.
\newblock In {\em CVPR}, pages 2901--2908, 2014.

\bibitem{silva2005precision}
L.~Silva, O.~R.~P. Bellon, and K.~L. Boyer.
\newblock Precision range image registration using a robust surface
  interpenetration measure and enhanced genetic algorithms.
\newblock {\em TPAMI}, 27(5):762--776, 2005.

\bibitem{slonim2013hartigan}
N.~Slonim, E.~Aharoni, and K.~Crammer.
\newblock Hartigan's k-means versus {L}loyd's k-means: {I}s it time for a
  change?
\newblock In {\em IJCAI}, pages 1677--1684, 2013.

\bibitem{telgarsky2010hartigan}
M.~Telgarsky and A.~Vattani.
\newblock Hartigan's method: k-means clustering without {V}oronoi.
\newblock In {\em AISTATS}, pages 820--827, 2010.

\bibitem{wachowiak2004approach}
M.~P. Wachowiak, R.~Smol{\'\i}kov{\'a}, Y.~Zheng, J.~M. Zurada, and A.~S.
  Elmaghraby.
\newblock An approach to multimodal biomedical image registration utilizing
  particle swarm optimization.
\newblock {\em TEVC}, 8(3):289--301, 2004.

\bibitem{xu2013unnatural}
L.~Xu, S.~Zheng, and J.~Jia.
\newblock Unnatural {L}0 sparse representation for natural image deblurring.
\newblock In {\em CVPR}, pages 1107--1114, 2013.

\bibitem{yang2013go}
J.~Yang, H.~Li, and Y.~Jia.
\newblock Go-{ICP}: Solving {3D} registration efficiently and globally
  optimally.
\newblock In {\em ICCV}, pages 1457--1464, 2013.

\end{thebibliography}
}

\end{document}